\newcommand{\longversion}[1]{#1}
\newcommand{\shortversion}[1]{}
\DeclareMathAlphabet{\mathcal}{OMS}{cmsy}{m}{n}
\newcommand{\assume}[2]{\ensuremath{#1}[#2]}
\newcommand{\sddcompl}{\Phi_{\lp}}
\newcommand{\sddcomplL}{\Phi_{\assume{\lp}{L}}}
\newcommand{\CG}{\mathcal{G}_{\assume{\lp}{L}}}
\newcommand{\defi}{\coloneqq}                      %
\newcommand{\lp}{\Pi}                              %
\newcommand{\A}[1][\lp]{\mathcal{\mathcal{L}}(#1)}           %
\newcommand{\AS}[1][\lp]{\mathcal{AS}(#1)}         %
\DeclareMathOperator{\Mod}{\mathcal{M}}         %
\DeclareMathOperator{\SMod}{\mathcal{S}}         %
\DeclareMathOperator{\ic}{ic}         %
\def\hy{\hbox{-}\nobreak\hskip0pt}
\newcommand{\SP}[1][\lp]{\mathcal{S}(#1)}          %
\newcommand{\ass}{L}                          %
\newcommand{\pneg}{\neg} %
\newcommand{\sd}[1][\lp]{\Phi_{#1}}                %
\newcommand{\cg}[1][\lp]{\ensuremath{\mathcal{G}(#1)}}         %
\DeclareMathOperator{\compl}{comp}
\newcommand{\val}{\ensuremath{\mathit{val}}}
\newcommand{\Card}[1]{\ensuremath{|#1|}}
\let\phi=\varphi
\let\epsilon=\varepsilon
\newcommand{\atom}{a}%
\newcommand{\at}{at}%
\newcommand{\vars}{\text{vars}} %
\newcommand{\complexityClassFont}[1]{\ensuremath{\mathrm{#1}}}
\newcommand{\NP}{\text{\complexityClassFont{NP}}\xspace}
\newcommand{\PH}{\text{\complexityClassFont{PH}}\xspace}
\newcommand{\Ptime}{\complexityClassFont{P}\xspace}
\newcommand{\numberDotP}{\complexityClassFont{\#{\cdot}\Ptime}\xspace}
\newcommand{\PS}{\ensuremath{\Ptime^{\numberDotP}}\xspace}
\newtheorem{definition}{Definition}
\newtheorem{theorem}{Theorem}
\newtheorem{observation}{Observation}
\newtheorem{corollary}{Corollary}
\newtheorem{example}{Example}
\newcommand\bcmdtab{\noindent\bgroup\tabcolsep=0pt%
  \begin{tabular}{@{}p{10pc}@{}p{20pc}@{}}}
\newcommand\ecmdtab{\end{tabular}\egroup}
  \title[IASCAR: Incremental Answer Set Counting by Anytime Refinement]%
        {IASCAR: Incremental Answer Set Counting by Anytime Refinement}
  \author[J. K. Fichte et al.]
         {%
	JOHANNES K. FICHTE\\
        Link\"oping University, Department of Computer Science (IDA),
        Sweden\\
          	\email{johannes.fichte@liu.se}
          	\and SARAH ALICE GAGGL\\
          	TU Dresden, Logic Programming and Argumentation Group, Germany\\
          	\email{sarah.gaggl@tu-dresden.de}
          	\and MARKUS HECHER\\
          	Massachusetts Institute of Technology, USA\\
          	\email{hecher@mit.edu}
          	\and DOMINIK RUSOVAC\\
          	TU Dresden, Logic Programming and Argumentation Group, Germany\\
          	\email{dominik.rusovac@tu-dresden.de}
		 }
\newtheorem{lemma}{Lemma}[section]
\newcommand{\myhighlight}[1]{#1}
\begin{document}

\label{firstpage}

\maketitle

  \begin{abstract}
	Answer set programming (ASP) is a popular declarative programming paradigm
	with various applications.
        Programs can easily have many answer sets that cannot be
        enumerated in practice, but counting still allows quantifying
        solution spaces.
        If one counts under assumptions on literals, one
	obtains a tool to comprehend parts of the solution space, so-called
	\emph{answer set navigation}. However, navigating through parts of the solution
	space requires counting many times, which is expensive in theory. %
	\emph{Knowledge compilation} compiles instances into representations on
	which counting works in polynomial time. However, these techniques exist
	only for CNF formulas, and compiling ASP programs into CNF formulas can
	introduce an exponential overhead.
        This paper introduces a technique to iteratively count answer
        sets under assumptions on knowledge compilations of CNFs that
        encode supported models.
        Our anytime technique uses the inclusion-exclusion principle
        to improve bounds by over- and undercounting systematically.
        In a preliminary empirical analysis, we demonstrate promising
        results.  After compiling the input (offline phase), our
        approach quickly (re)counts.
  \end{abstract}

  \begin{keywords}
	  ASP, answer set counting, knowledge compilation
  \end{keywords}

\section{Introduction}
\emph{Answer set programming
(ASP)}\shortversion{~\cite{MarekTruszczynski99}}\longversion{~\cite{MarekTruszczynski99,Niemela99,BrewkaEiterTruszczynski11}}
is a widely used declarative problem modeling and solving paradigm with many
applications in artificial intelligence such as knowledge representation, planning, and
many more\longversion{~\cite{Baral03,PontelliSonBaralGelfond12}}. It is widely
used to solve difficult search problems while allowing compact
modeling~\cite{GebserKaufmannSchaub12a}.
In ASP, a problem is represented as a set of rules, called \emph{logic
  program}, over atoms. Models of a program under the stable
semantics\longversion{~\cite{GelfondLifschitz88,GelfondLifschitz91}} form its
solutions, so-called \emph{answer sets}.
Beyond the search for one solution or an optimal solution, an
increasingly popular
question is counting answer sets, which provides extensive
applications for quantitative reasoning.
For example, counting is crucial for probabilistic logic
programming,~c.f.,~\shortversion{\cite{FierensEtAl15,WangLee15}}%
\longversion{\cite{FierensEtAl15,WangLee15,LeeWang2015}} %
or encoding Bayesian networks and their
inference~\cite{SangBeameKautz05a}.

Interestingly, counting also facilitates more fine-grained reasoning
modes between brave and cautious reasoning. To this end, one examines
the ratio of an atom occurring in answer sets over all answer sets,
which yields a notion of \emph{plausibility} of an
atom. %
When considering sets of literals, which represent assumptions,
one obtains a detailed tool to \emph{comprehend search spaces} that
contain a large number of answer
sets~\cite{FichteGagglRusovac21}\shortversion{.}\longversion{,~e.g., for configuration
problems~\cite{dimopoulos1997encoding,lifschitz1999action,nogueira2001prolog}.}
However, already for ground normal programs, answer set counting is
$\numberDotP$-complete\longversion{~\cite{FichteEtAl17}}, making it
harder than decision problems.
Recall that brave reasoning is just $\NP$-complete, but by Toda's
Theorem we know that $\PH \subseteq \PS$\longversion{~\cite{Toda91}}
where $\bigcup_{k \in \mathbb{N}}\Delta _{k}^P = \PH$ and
$\NP \subseteq \Delta^P_2 =
\Ptime^\NP$\longversion{~\cite{Stockmeyer76}}.
Approximate counting is in fact
easier,~i.e.,~$\text{approx}\hy\numberDotP \subseteq \text{BPP}^\NP
\subseteq \Sigma^P_3$\longversion{~\cite{Lautemann1983,Sipser1983,Stockmeyer1983}},
and approximate answer set counters have very recently been suggested~\cite{KabirEverardoShukla22}.
Still, when navigating large search spaces, we need to
count %
answer sets many times rendering such tools
conceptually ineffective.
There, knowledge compilation comes in handy%
\longversion{~\cite{Darwiche04a}}\shortversion{~\cite{darwiche2002knowledge}}.

In \emph{knowledge compilation}, computation is split in two
phases. Formulas are compiled in a potentially very expensive step
into a representation in an \emph{offline phase} and reasoning
is carried out in polynomial time on such representations in
an \emph{online phase}. Such a conceptual framework would be perfectly
suited when answer sets are counted many times, 
providing us with quick re-counting.
While we can translate programs into propositional
formulas\longversion{~\cite{LeeLifschitz03,Lee05,JanhunenNiemela11}} and directly
apply techniques from propositional formulas\longversion{~\cite{LagniezMarquis17a}},
it is widely known that one can easily run into an exponential
blowup~\cite{LifschitzRazborov06} or introduce level
mappings\longversion{~\cite{Janhunen06}} that are oftentimes large grids and hence
expensive for counters.
In practice, solvers that find one answer set or optimal answer sets
can %
avoid a blowup by computing supported models, which can
be encoded into propositional formulas with limited overhead, and
implementing propagators on top~\cite{GebserKaufmannSchaub09}.

In this paper, we explore a counterpart of a propagator-style
approach %
for counting answer sets. %
We encode finding supported models as a propositional formula and use a
knowledge compiler to obtain, in an offline phase, a representation,
which allows us to construct a counting graph that in turn can be used
to  compute the number of supported models efficiently. The resulting counting graph can be large but evaluated in parallel.
Counting supported models only provides an upper bound on the number of answer sets.
\longversion{Therefore, we}\shortversion{We} suggest a combinatorial technique
to systematically improve bounds by over- and undercounting while incorporating
the external support, whose absence can be seen as the cause of overcounting in the
first place. Our technique can be used to approximate the counts but also
provides the exact count on the number of answer sets when taking the entire
external support into account.
\paragraph{Contributions.} Our main contributions are as follows.
\begin{enumerate}
\item We consider knowledge compilation from an ASP perspective. We
  recap features such as counting under assumptions, known as conditioning,
  that make knowledge compilations (sd-DNNFs) quite suitable for
  navigating search spaces.
  We suggest a domain-specific technique to compress counting graphs
  that were constructed for supported models using Clark's completion.
\item We establish a novel combinatorial algorithm that takes an
  sd-DNNF of a completion formula and allows for systematically
  improving bounds by over- and undercounting. The technique
  identifies not supported atoms and compensates for overcounting on the
  sd-DNNF.
\item We apply our approach to instances tailored to navigate
  incomprehensible answer set search spaces. 
  While the problem is challenging in general, we %
  demonstrate feasibility and promising results on quickly (re-)counting.
\end{enumerate}

\paragraph{Related Works.}
Previous work~\cite{BogaertsBroeck2015} considered knowledge
compilation for logic programs. There an eager incremental
approximation technique incrementally computes the result whereas our
approach can be seen as an incremental lazy approach on the counting
graph. Moreover, the technique by Bogarts and Broeck focuses on
well-founded models and stratified negation, which does not work for
normal programs in general without translating ASP programs into CNFs
directly.
Note that common reasoning problems on answer set programs without
negation can be solved in polynomial time\longversion{~\cite{Truszczynski11}}.
Model counting can significantly benefit from preprocessing
techniques\longversion{~\cite{LagniezLoncaMarquis16a,LagniezMarquis14a}}, which
eliminate variables.
Widely used propositional knowledge compilers are
c2d\shortversion{~\cite{darwiche1999compiling}}\longversion{~\cite{Darwiche04a}} and d4\shortversion{~\cite{LagniezMarquis17a}}.
 \myhighlight{%
  Very recent works consider enumerating answer
  sets~\cite{AlvianoDodaroFiorentino23a}, which can be beneficial for
  counting if the number of answer sets is sufficiently low.
  More advanced enumeration techniques have also recently been studied
  for propositional
  satisfiability~\cite{MasinaSpallittaSebastiani23a,SpallittaSebastianiBiere23a}.
}

\paragraph{Prior Work.} %
This paper extends the conference publication~\cite{FichteGHR22}.
  The paper contains more elaborate examples and proofs that have been
  omitted in the preliminary version.
  We now provide an empirical evaluation on relevant instances and
  instances that have been used for counting in previous works.  We
  formulate detailed questions and hypotheses for our algorithm's
  implementation and evaluation. Now, our evaluation incorporates two
  instance sets containing a large number of instances, and we compare
  our approach to state-of-the-art model counters.

\section{Preliminaries}
We assume familiarity with propositional
satisfiability~\cite{Kleine-BuningLettmann99}, graph
theory~\cite{BondyMurty08}, and propositional
ASP~\cite{GebserKaufmannSchaub12a}.  Recall that a \emph{cycle}~$C$ on
a (di)graph~$G$ is a (directed) walk of~$G$ where the first and the
last vertex coincide.  For cycle~$C$, we let $V_C$ be its vertices and
$\mathit{cycles}(G) \defi \{ V_C \mid C \text{ is a cycle of } G\}$.
We consider %
propositional \emph{variables} and mean by formula a propositional
formula.  By $\top$ and $\bot$ we refer to the variables that are
always evaluated to $1$ or $0$ (constants). %
A literal is an atom $\atom$ or its negation~$\neg a$, and
$\vars(\phi)$ denotes the set of variables that occur in
formula~$\phi$.  The set of models of a formula~$\varphi$ is given by
$\Mod(\varphi)$. %
\longversion{%
  Below, we introduce the necessary background and notation
  used in the paper for ASP, and knowledge
  compilation.
}

\newcommand{\reduct}[2]{\ensuremath{#1}_{#2}}

\longversion{\paragraph{Answer Set Programming.}  Let us recall basic
  notions of ASP, for further details we refer to standard
  texts~\cite{GebserKaufmannSchaub12a}. 
} 
\shortversion{%
\paragraph{Answer Set Programming (ASP).}
} %
In the context of ASP, we usually say atom instead of variable.
A \emph{(propositional logic) program} $\lp$ is a finite set of
\emph{rules} $r$ of the form
$$\atom_0 \leftarrow \atom_{1}, \ldots ,\atom_m, \pneg
\atom_{m+1}, \ldots, \pneg \atom_n$$
where $0 \leq m \leq n$ and $\atom_0, \ldots, \atom_n$ are atoms and
usually omit~$\top$ and~$\bot$.
For a rule $r$, we define $H(r) \defi \{\atom_0\}$ called \emph{head}
of~$r$. The \emph{body} %
consists of $B^+(r) \defi \{\atom_{1}, \dots, \atom_m\}$ and
$B^-(r) \defi \{ \atom_{m+1}, \dots,
\atom_n\}$. %
The set~$\at(r)$ of atoms of~$r$ consists
of~$H(r)\cup B^+(r) \cup B^-(r)$. %
Let $\lp$ be a program. Then, we let the set~$\at(\lp) \defi
\bigcup_{r \in \lp}\at(r)$ of~$\lp$
contain its atoms.
Its \emph{positive dependency
  digraph}~$\mathit{DP}(\lp)=(V,E)$ is defined by $V \defi \at(\lp)$ and $E
\defi \{(a_1,a_0) \mid \atom_1 \in B^+(r), \atom_0 \in H(r), r \in
\lp\}$. 
The \emph{cycles of~$\Pi$} are given by~$\mathit{cycles}(\Pi)\defi\mathit{cycles}(\mathit{DP}(\Pi))$.
$\lp$ is \emph{tight}, if $\mathit{DP}(\lp)$ is acyclic.
An \emph{interpretation} of $\lp$ is a set~$I\subseteq \at(\lp)$ of atoms.
$I$ \emph{satisfies} a rule $r \in \lp$ if
$H(r) \cap I \neq \emptyset$ whenever $B^+(r) \subseteq I$ and
$B^-(r) \cap I = \emptyset$. $I$ satisfies~$\lp$, if $I$ satisfies
each rule~$r \in \lp$.
The \emph{GL-reduct}~$\reduct{\lp}{I}$ is defined by %
$\reduct{\lp}{I} \defi \{H(r) \leftarrow B^+(r) \mid I \cap B^-(r) =
\emptyset, r \in \lp\}$.
$I$ is an \emph{answer set}, sometimes also called stable model, if
$I$ satisfies~$\reduct{\lp}{I}$ and~$I$ is subset-minimal.
The \emph{completion}\longversion{~\cite{clark1978negation}}
of~$\Pi$ is the propositional formula
$$\compl(\Pi)\defi \bigwedge_{a \in \at(\lp)} \atom
\leftrightarrow \bigvee_{r\in\Pi, H(r)=\atom}\mathit{BF}(r)$$
 where
 $$\mathit{BF}(r)\defi \bigwedge_{b \in
   B^+(r)}b \wedge \bigwedge_{c\in B^-(r)} \neg c.$$
 where, as usual, the conjunction for an empty set is understood as $\top$ and the empty disjunction as $\bot$.
An
interpretation~$I$ is a \emph{supported
  model}\longversion{~\cite{apt1988towards}} of
$\lp$, if it is a model of the formula~$\compl(\Pi)$.
Let $\SP$ be the set of all supported models of
$\Pi$. It holds that $\AS \subseteq
\SP$\longversion{~\cite{marek1992relationship}}, but not vice-versa.
If $\lp$ is tight, then $\AS =
\SP$\longversion{~\cite{cois1994consistency}}.
In practice, we use the completion in CNF, thereby introducing auxiliary variables
and still preserving the number of supported models.
\begin{example}\label{ex:running}
    
  Let $\lp_1 = \{a \leftarrow b; b \leftarrow; c \leftarrow c\}$.  
  We see that $\mathit{DP}(\Pi_1)$ is cyclic due to rule $c \leftarrow c$.
  Thus, $\lp_1$ is not
 tight and its respective answer sets $\AS[\lp_1] =
  \{\{a,b\}\}$ and supported
 models $\SP[\lp_1] = \{\{a,b\}, \{a,b,c\}\}$
  differ.
\end{example}

\paragraph{Assumptions.}
We define $\neg L \defi \{ \neg a \mid a \in L\}$ for a set~$L$ of
literals and assume that $\neg \neg a$ stands for $a$.  Let
$\lp$ be a program and $\mathcal{L}(\lp) \coloneqq \at(\lp) \cup \neg
\at(\lp)$ be its literals.  An \emph{assumption} is a literal~$\ell
\in \mathcal{L}(\Pi)$ interpreted as rule $\ic(\ell) \defi
\{\bot\leftarrow \pneg \ell\}$. %
For set~$L$ of assumptions of~$\lp$, we say that
$L$ is \emph{consistent}, if there is no atom~$a \in
L$ for which~$\neg a \in L$. Throughout this paper,
by~$L$ we refer to consistent assumptions.  Furthermore, we
define~$\ic(L) \defi \bigcup_{\ell \in L} \ic(\ell)$ and let~$\assume{\lp}{\ass}
\defi \lp \cup \ic(L)$.
\begin{example}
  Consider program~$\Pi_1$ from Example~\ref{ex:running},
  with
  $\AS[\Pi_1] = \{\{a,b\}\}$. For
  $\ass_1 \subseteq \{a, b, \pneg c\}$, we obtain the same answer
    sets,~i.e., $\mathcal{AS}(\Pi_1) = \AS[\assume{\Pi_1}{\ass_1}]$.
However, for any $\ass_2 \not\subseteq \{a, b, \pneg c\}$
we
  obtain $\AS[\assume{\Pi_1}{\ass_2}] = \emptyset$.
\end{example}

\paragraph{Knowledge
  Compilation and Counting on Formulas in
  sd-DNNF.} %
Let $\varphi$ be a formula,  $\varphi$ is in \emph{NNF (negation normal
  form)} if negations ($\neg$) occur only directly in front of
variables and the only other operators are conjunction ($\wedge$) and
disjunction ($\vee$)\longversion{~\cite{RobinsonVoronkov01}}.
NNFs can be represented in terms of {\em rooted directed acyclic
  graphs} (DAGs) where each leaf node is labeled with a literal, and
each internal node is labeled with either a conjunction
(\emph{$\wedge$-node}) or a disjunction (\emph{$\vee$-node}).

We use an NNF and its DAG interchangeably.
The \emph{size of an NNF}~$\varphi$, denoted by~$|\varphi|$, is given
by the number of edges in its DAG.
Formula~$\varphi$ is in \emph{DNNF}, if it is in NNF and it satisfies the
\emph{decomposability} property, that is, for any distinct subformulas
$\psi_i, \psi_j$ in a conjunction $\psi=\psi_1\land\dots \land \psi_n$
with $i\neq j$, we have
$\vars(\psi_i)\cap\vars(\psi_j)=\emptyset$\longversion{~\cite{Darwiche04a}}.
Formula $\varphi$ is in \emph{d-DNNF}, if it is in DNNF and it satisfies the
\emph{decision} property, that is, disjunctions are of the form
$\psi=(x\land \psi_1)\lor(\neg x\land \psi_2)$.
Note that $x$ does not occur in $\psi_1$ and $\psi_2$ because of
decomposability. $\psi_1$ and $\psi_2$ may be conjunctions.
Formula $\varphi$ is in \emph{sd-DNNF}, if all disjunctions in $\psi$ are
smooth, meaning for $\psi=\psi_1\lor \psi_2$ we have
$\vars(\psi_1) = \vars(\psi_2)$.

Determinism and smoothness permit traversal operations on sd-DNNFs to
count models of~$\varphi$ in linear time
in~$|\varphi|$\longversion{~\cite{darwiche2001tractable}}. The
traversal takes place on the so-called counting graph of an sd-DNNF.
The {\em counting graph}~$\mathcal{G}(\varphi)$ is the DAG of~$\varphi$ where
each node~$N$ is additionally labeled by $\val(N) \defi 1$, if $N$
consists of a literal; labeled by $\val(N) \defi \Sigma_{i} \val(N_i)$, if
$N$ is an $\vee$-node with children~$N_i$; labeled by
$\val(N) \defi \Pi_{i} \val(N_i)$, if $N$ is an $\wedge$-node.
By~$\val(\mathcal{G}(\varphi))$ we refer to~$\val(N)$ for the root~$N$
of~$\mathcal{G}(\varphi)$.
Function $\val$ can be constructed by traversing $\mathcal{G}(\varphi)$
in post-order in polynomial time.

It is well-known that $\val(\cg[\varphi])$ equals the model count
of~$\varphi$.  For a set~$L$ of literals, counting of
$\phi^L \defi \phi \wedge \bigwedge_{\ell \in L}\ell$ can be carried
out by \emph{conditioning} of~$\phi$
on~$L$~\cite{darwiche1999compiling}.  Therefore, the function~$\val$
on the counting graph is modified by setting $\val(N) = 0$, if~$N$
consists of~$\ell$ and~$\neg \ell \in L$.  This corresponds to
replacing each literal~$\ell$ of the NNF~$\varphi$ by constant~$\bot$
or $\top$, respectively. From now on, we denote by~$\sddcomplL$ an
equivalent sd-DNNF of~$\compl(\assume{\lp}{L})$ and its counting graph
by~$\CG$. Note that~$\assume{\lp}{L} = \lp$ for~$L = \emptyset$. The
conditioning of~$\mathcal{G}_\lp$ on~$L$ is denoted
by~$(\mathcal{G}_\lp)^L$.

\section{Counting Supported Models}%
In our applications mentioned in the introduction, we are interested
in counting multiple times under assumptions.
\myhighlight{In other words, we count the total number of answer sets and
  the number of answer sets under various changing assumptions.
}%
Therefore, we extend known techniques from knowledge
compilation~\cite{darwiche2002knowledge}.

The general outline for a given program~$\lp$ is as follows: (i)~we
construct the formula~$\compl(\lp)$ that can (ii)~be compiled in a
computationally expensive step into a formula~$\Phi_{\compl(\Pi)}$ in
a normal form, so-called sd-DNNF by existing knowledge
compilers. Then, (iii) on the sd-DNNF~$\Phi_{\compl(\Pi)}$ counting
can be done in polynomial time in the size of~$\Phi_{\compl(\Pi)}$.
We can even count under a set~$L$ of propositional assumptions by the
technique known as conditioning.

However, this approach yields only the number of supported models
under assumptions and we overcount compared to the number of answer
sets. To this end, in Section~\ref{sec:inc}, (iv)~we present a
technique to incrementally reduce the overcount.

In the following, we recall how knowledge compilation can be used
to count formulas under assumptions by assuming that a formula is in
sd-DNNF and constructing a counting graph.
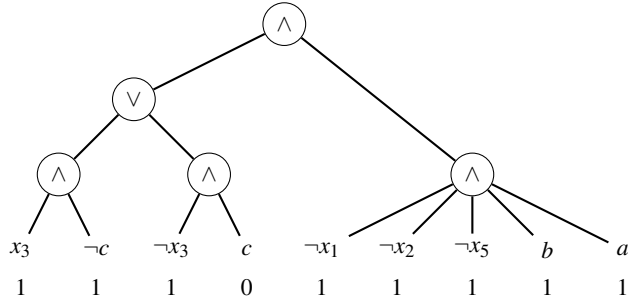
\begin{figure}[h]
\centering
\begin{tikzpicture}[
grow=right,
level distance=200mm,
sibling distance=105mm]
    \node[draw=none] (A) at (0,-1) {$x_3$};
    \node[draw=none] (B) at (1,-1){$\pneg c$};
    \node[draw=none] (C) at (2,-1){$\pneg x_3$};
    \node[draw=none] (D) at (3,-1) {$c$};
	\node[draw=none] (A_) at (0,-1.5) {\textcolor{black}{$1$}};
    \node[draw=none] (B_) at (1,-1.5){\textcolor{black}{$1$}};
    \node[draw=none] (C_) at (2,-1.5){\textcolor{black}{$1$}};
    \node[draw=none] (D_) at (3,-1.5) {\textcolor{black}{$0$}};

    \node[shape=circle,draw=black,fill=white] (E) at (0.5,0) {$\wedge$};
    \node[shape=circle,draw=black,fill=white] (F) at (2.5,0) {$\wedge$};
    \node[draw=none] (G) at (4,-1) {$\pneg x_1$};
    \node[draw=none] (H) at (5,-1) {$\pneg x_2$};
    \node[draw=none] (I) at (6,-1) {$\pneg x_5$};
    \node[draw=none] (J) at (7,-1) {$b$};
    \node[draw=none] (K) at (8,-1) {$a$};
    \node[draw=none] (G_) at (4,-1.5) {\textcolor{black}{$1$}};
    \node[draw=none] (H_) at (5,-1.5) {\textcolor{black}{$1$}};
    \node[draw=none] (I_) at (6,-1.5) {\textcolor{black}{$1$}};
    \node[draw=none] (J_) at (7,-1.5) {\textcolor{black}{$1$}};
    \node[draw=none] (K_) at (8,-1.5) {\textcolor{black}{$1$}};
    \node[shape=circle,draw=black,fill=white] (L) at (6,0) {$\wedge$};
    \node[shape=circle,draw=black,fill=white] (M) at (1.5,1) {$\vee$};
    \node[shape=circle,draw=black,fill=white] (N) at (3.5,2) {$\wedge$};

    \path [-,thick](A) edge node[right] {} (E);
    \path [-,thick](B) edge node[right] {} (E);
    \path [-,thick](C) edge node[right] {} (F);
    \path [-,thick](D) edge node[right] {} (F);
    \path [-,thick](G) edge node[right] {} (L);
    \path [-,thick](H) edge node[right] {} (L);
    \path [-,thick](I) edge node[right] {} (L);
    \path [-,thick](J) edge node[right] {} (L);
    \path [-,thick](K) edge node[right] {} (L);

    \path [-,thick](E) edge node[right] {} (M);
    \path [-,thick](F) edge node[right] {} (M);

    \path [-,thick](M) edge node[right] {} (N);
    \path [-,thick](L) edge node[right] {} (N);

\end{tikzpicture}
\caption{Counting graph~$\mathcal{G}(\varphi \wedge \neg c)$
	labeled with literals and their respective value.
}
\label{fig:cg}
\end{figure}
\begin{example}\label{ex:3}
  Consider the
  sd-DNNF~$\varphi_1=((x_3 \land \neg c) \lor (\neg x_3 \land c))
  \land (\neg x_1 \land \neg x_2 \land \neg x_5 \land a \land b)$.  We
  observe in Figure~\ref{fig:cg} that its rooted directed acyclic
  graph (DAG) has $14$~nodes, $7$~variables, and~$13$~edges.
  In consequence, we have that $|\varphi_1|=13$.  By conditioning of $\varphi$ on $L=\{\neg c\}$,
  each variable in $L$ will be removed from~$\cg[\varphi_1]$ and we obtain
  $\varphi_1 \wedge \neg c = ((x_3 \land \neg \bot) \lor (\neg x_3
  \land \bot)) \land (\neg x_1 \land \neg x_2 \land \neg x_5 \land a
  \land b)$.  From Figure~\ref{fig:cg}, we observe that the model
  count~$\val(\cg[\varphi\wedge \neg c])$ of
  formula~$\varphi\wedge \neg c$ is~$1$.
\end{example}
Using the techniques as described above, we can compile the
formula~$\compl(\lp)$ into an sd-DNNF~$\Phi_{\compl(\lp)}$ and count the
number~$\Card{\SP}$ of supported models. We illustrate this in the
following example.
\begin{example}\label{ex:prog}
  Consider $\lp_1$ from Example~\ref{ex:running}.
  When constructing~$\compl(\lp_1)$ in CNF, we obtain~$10$ clauses with~$4$ new
  auxiliary variables~$x_1$, $x_2$, $x_3$, and $x_5$.
  We can compile it into an sd-DNNF~$\Phi_{\lp_1}$ which
  is logically equivalent to~$\compl(\lp_1)$. For illustration
  purposes, we chose formula~$\varphi_1$ from Example~\ref{ex:3} such that
  $\Phi_{\lp_1}$ is equivalent to $\varphi_1$.
  Hence, we can obtain the number~$\Card{\SMod(\Pi_1)}$ of supported
	models from~$\val(\mathcal{G}_{\lp_1})$.
\end{example}

\subsection{Counting Supported Models under Assumptions}
\myhighlight{Since assumptions of formulas and programs behave slightly
  differently due to the GL reduct}, it is not immediately clear that
we can use conditioning to obtain the number of supported models of a
program under given assumptions.  In the following we will show that
supported models of~$\lp$ under assumptions~$\ass$ coincide with
models of~$\sddcomplL$.

\begin{observation}\label{lem:models}
  Let $\lp$ be a program and $\ass$ assumptions. Then,
  $\Mod(\sddcomplL) = \SP[\assume{\lp}{\ass}]$
\end{observation}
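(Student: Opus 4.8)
The plan is to peel off the definitions until the statement becomes an (almost) immediate identity, isolating the one spot where the interaction between assumptions and the GL reduct could in principle cause trouble. Recall that, by our convention, $\sddcomplL$ denotes an sd-DNNF logically equivalent to $\compl(\assume{\lp}{\ass})$, so $\Mod(\sddcomplL) = \Mod(\compl(\assume{\lp}{\ass}))$; and by the definition of a supported model, an interpretation is a supported model of $\assume{\lp}{\ass}$ precisely when it satisfies $\compl(\assume{\lp}{\ass})$, i.e.\ $\SP[\assume{\lp}{\ass}] = \Mod(\compl(\assume{\lp}{\ass}))$. Chaining these two equalities already yields the claim, so the real content is only to check that ``$\compl(\assume{\lp}{\ass})$'' on both sides refers to the same object, up to logical equivalence and up to the auxiliary variables introduced by CNF conversion.

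First I would compute $\compl(\assume{\lp}{\ass})$ explicitly and establish $\compl(\assume{\lp}{\ass}) \equiv \compl(\lp) \wedge \bigwedge_{\ell\in\ass}\ell$. The added rules $\ic(\ell)=\{\bot\leftarrow\pneg\ell\}$ all have head $\bot$, so: (i) they leave the completion conjunct $a\leftrightarrow\bigvee_{r,\,H(r)=a}\mathit{BF}(r)$ of every $a\in\at(\lp)$ untouched; and (ii) they contribute exactly the single conjunct $\bot\leftrightarrow\bigvee_{\ell\in\ass}\mathit{BF}(\ic(\ell))$. Since $\bot$ denotes the constant $0$ and $\mathit{BF}(\bot\leftarrow\pneg\ell)=\pneg\ell$ (using $\pneg\pneg a=a$), this conjunct simplifies to $\neg\bigvee_{\ell\in\ass}\pneg\ell$, which is equivalent to $\bigwedge_{\ell\in\ass}\ell$. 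This is precisely the point flagged before the statement: although adding a constraint $\ic(\ell)$ is not, a priori, the same operation as substituting a truth constant for $\ell$ in the GL reduct, \emph{at the level of Clark's completion} the constraint rules collapse exactly to the conjunction of the assumed literals, i.e.\ to conditioning. In particular $\Mod(\sddcomplL)$ coincides with the set of models of $\sddcompl$ conditioned on $\ass$, which is what the subsequent sections exploit.

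The only remaining subtlety is that in practice $\compl(\lp)$ is kept in CNF with fresh auxiliary variables (as in Example~\ref{ex:prog}), so strictly speaking $\Mod$ of the CNF ranges over a larger variable set. Here I would invoke that these auxiliary variables are functionally determined by $\at(\lp)$, so the CNF and $\compl(\lp)$ have the same models after projection onto $\at(\lp)$ and the projection is a bijection --- this is exactly the ``preserving the number of supported models'' remark from the preliminaries --- which licenses identifying $\Mod(\sddcomplL)$ with $\SP[\assume{\lp}{\ass}]$ as a family of interpretations over $\at(\lp)$. The step that needs the most care is this last identification: one must treat $\bot$ consistently as a head atom in Clark's completion and make sure the auxiliary variables really are functionally determined (as guaranteed by a standard Tseitin encoding). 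Beyond that the proof is pure definition-chasing together with the one-line simplification of the completion of the constraint rules, so I do not expect a deeper difficulty.
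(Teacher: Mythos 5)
Your proposal is correct and takes essentially the same route as the paper: the paper states this as an unproved observation precisely because, as you note, $\sddcomplL$ is by definition logically equivalent to $\compl(\assume{\lp}{\ass})$ and a supported model of $\assume{\lp}{\ass}$ is by definition a model of $\compl(\assume{\lp}{\ass})$, so the identity is definition-chasing. The extra step you carry out, namely $\compl(\assume{\lp}{\ass}) \equiv \compl(\lp) \wedge \bigwedge_{\ell\in\ass}\ell$ via the collapse of the $\bot$-headed constraint rules, is not needed for the observation itself but is exactly Equation~(1) in the paper's proof of the subsequent lemma on conditioning, so you have simply front-loaded that argument.
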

For any program~$\lp$ the conditioning~$(\Phi_\Pi)^L$ on 
assumptions~$\ass$
allows us to identify supported models of a program~$\assume{\lp}{\ass}$.
\begin{lemma}
  Let $\lp$ be a program and $L$ be
  assumptions. Then,
  $\Mod({(\sddcompl)^{L}})=\SMod(\assume{\lp}{\ass})$.
\end{lemma}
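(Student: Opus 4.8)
The plan is to reduce the statement to Observation~\ref{lem:models} together with the semantics of conditioning recalled in the Preliminaries. The key observation is that two things coincide: on the one hand, adding the integrity constraints $\ic(L)$ to the program and then taking the completion; on the other hand, taking the completion of $\lp$ alone and conjoining the unit literals in $L$. Concretely, I would first argue that $\compl(\assume{\lp}{L})$ and $\compl(\lp)\wedge\bigwedge_{\ell\in L}\ell$ have the same models over $\at(\lp)$ (possibly up to the auxiliary variables introduced for the CNF, which do not affect the count). Each assumption $\ell\in L$ contributes the rule $\bot\leftarrow\pneg\ell$, i.e.\ $\bot\leftarrow\pneg a$ if $\ell=a$ and $\bot\leftarrow a$ if $\ell=\pneg a$; in Clark's completion such a rule forces the body to be false in every model, which is exactly the constraint ``$a$ is true'' resp.\ ``$a$ is false'', i.e.\ the literal $\ell$. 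Since $\bot$ never appears in a head, these rules add no disjuncts to any equivalence in $\compl(\lp)$ and only append the conjuncts $\bigwedge_{\ell\in L}\ell$. Hence $\Mod(\compl(\assume{\lp}{L})) = \Mod(\compl(\lp))\cap\Mod(\bigwedge_{\ell\in L}\ell) = \Mod(\compl(\lp)^{L})$.

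Next I would invoke the fact, recalled right before Example~\ref{ex:3}, that conditioning is semantically faithful: for the sd-DNNF $\sddcompl$ equivalent to $\compl(\lp)$, the conditioned graph $(\sddcompl)^{L}$ represents exactly $\compl(\lp)\wedge\bigwedge_{\ell\in L}\ell$, so $\Mod((\sddcompl)^{L}) = \Mod(\compl(\lp)^{L})$. Chaining the two equalities gives $\Mod((\sddcompl)^{L}) = \Mod(\compl(\assume{\lp}{L})) = \SMod(\assume{\lp}{\ass})$, where the last step is precisely Observation~\ref{lem:models} applied to $\lp$ with assumptions $L$ (equivalently, the definition of supported model as a model of the completion). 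This is the whole argument; everything rests on the two intermediate equalities being set up cleanly.

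The step I expect to require the most care is the first equality, and specifically the bookkeeping around the auxiliary variables used to put $\compl(\lp)$ into CNF. The Preliminaries only promise that the CNF encoding \emph{preserves the number of supported models}, not that it is logically equivalent over the extended vocabulary; so I would phrase everything in terms of models projected onto $\at(\lp)$ (or, cleaner, work with $\compl(\assume{\lp}{L})$ as a propositional formula before CNF conversion and note that conditioning commutes with the equivalence-preserving transformation). A second small subtlety is consistency of $L$: since by convention $L$ is consistent, no clash $\{a,\pneg a\}$ arises, so $\compl(\lp)^{L}$ is not trivially unsatisfiable for a spurious reason, and conditioning the counting graph on $L$ is well defined (no literal is set to $0$ and its negation also set to $0$ within the same leaf). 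With those two points handled, the rest is immediate from Observation~\ref{lem:models} and the definition of $(\mathcal{G}_\lp)^{L}$.
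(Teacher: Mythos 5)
Your proposal is correct and follows essentially the same route as the paper's proof: establish that $\compl(\assume{\lp}{L})$ has the same models as $\compl(\lp)\wedge\bigwedge_{\ell\in L}\ell$ (by noting that the completion of the constraint rules $\bot\leftarrow\pneg\ell$ amounts to asserting the literals $\ell$), then use the semantics of conditioning and the definition of supported models. Your added remarks on auxiliary CNF variables and consistency of $L$ are sensible bookkeeping that the paper leaves implicit, but they do not change the argument.
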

\begin{proof}

We first establish the following claim:
\begin{align}\label{eqn:compl-ass}
  \compl(\assume{\Pi}{L})=\compl(\Pi\cup\ic(L)) = %
  \compl(\Pi)\wedge \bigwedge_{\ell\in L} \ell
\end{align}
\noindent By definition, we have that $\compl(\assume{\Pi}{L})=\compl(\Pi\cup \ic(L))$.
This further evaluates to $\compl(\Pi)\cup \ic(L)$.
Since $\bot$ evaluates to false always and
\[\compl(\{\bot \leftarrow B(r)^+,\neg B(r)^- \mid r\in \lp,
  H(r)=\bot\}) = \bot \leftrightarrow \bigvee_{r\in \lp, H(r)=\bot}
  \mathit{BF}(r),\]
we obtain that
\begin{align}
	\Mod(\bot \leftrightarrow \bigvee_{r\in \lp, H(r)=\bot} \mathit{BF}(r)) =&
                                                                            \Mod(\bigwedge_{r\in \lp, H(r)=\bot} \bot\leftrightarrow\mathit{BF}(r)),\\
 = & \Mod(\bigwedge_{r\in \lp, H(r)=\bot} \neg\mathit{BF}(r)).
\end{align}
As a result,
\begin{align}
	\Mod(\compl(\assume{\Pi}{L}\setminus\ic(L))\cup \ic(L))=&\Mod(\compl(\assume{\Pi}{L}\setminus\ic(L))\cup\bigcup_{\ell\in
                                                  L}\compl(\ic(\ell))\\
  =&\Mod(\compl(\Pi)\wedge \bigwedge_{\ell\in
     L}\compl(\ic(\ell)))\\
  =&\Mod(\compl(\Pi)\wedge \bigwedge_{\ell\in L}\neg
     \mathit{BF}(\ic(\ell)))\\
  =&\Mod(\compl(\Pi)\wedge \bigwedge_{\ell\in L}
     \ell).
\end{align}
In consequence, Equation~\ref{eqn:compl-ass} holds.
It remains to show that conditioning~$(\Phi_\Pi)^L$ in the
sd-DNNF~$\Phi_\Pi$ preserves all models according to $\Pi$ under the
set~$L$ of assumptions.
By definition of conditioning, it holds that
$\Mod((\Phi_\Pi)^L)= \Mod(\Phi_\Pi \wedge \bigwedge_{\ell\in L}\ell)$.
By assumption, it is true that
$\Mod(\Phi_\Pi \wedge \bigwedge_{\ell\in L}\ell) =
\Mod(\compl(\Pi)\wedge \bigwedge_{\ell\in L}\ell)$.
From Equation~\ref{eqn:compl-ass}, we obtain that
$\Mod(\compl(\Pi)\wedge \bigwedge_{\ell\in L}\ell) =
\Mod(\compl(\assume{\Pi}{L}))$.  By definition,
$\Mod(\compl(\assume{\Pi}{L})) = \SP[\assume{\Pi}{L}]$.
In consequence, we established that
    $\Mod({(\Phi_{\lp})^{L}}) = \SP[\assume{\Pi}{L}]$.  Hence, the Lemma sustains.
\hfill
\end{proof}

Immediately, we obtain that we can count the number of supported
models by first compiling the completion into an sd-DNNF and then
applying conditioning. For tight programs, this already yields the
number of answer sets.
\begin{corollary}\label{corr:spcount}
  Let $\lp$ be a program and $\ass$ be assumptions. Then,
  \begin{align*}
  \val((\mathcal{G}_\Pi)^L) = \Card{\Mod((\Phi_\Pi)^L)} =
    \Card{\SP[\assume{\lp}{\ass}]}.
  \end{align*}
  If
  $\lp$ is tight, also $\val((\mathcal{G}_\Pi)^L) =
  \Card{\AS[\assume{\lp}{\ass}]}$ holds.
  Furthermore, counting can be done in time linear in~$|\sd|$.
\end{corollary}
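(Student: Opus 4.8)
The plan is to obtain all three claims directly from the preceding Lemma together with the two facts recalled in the preliminaries: that for an sd-DNNF~$\varphi$ the root label~$\val(\cg[\varphi])$ equals~$\Card{\Mod(\varphi)}$, and that conditioning the counting graph on a literal set~$L$ computes~$\Card{\Mod(\varphi^L)}$ for~$\varphi^L \defi \varphi \wedge \bigwedge_{\ell \in L}\ell$. Instantiating both facts with~$\varphi \defi \sddcompl$, which by construction is an sd-DNNF logically equivalent to~$\compl(\lp)$ (so that $\cg[\sddcompl] = \mathcal{G}_\Pi$ and $\Mod(\varphi^L) = \Mod(\compl(\lp)^L)$), yields the first equality $\val((\mathcal{G}_\Pi)^L) = \Card{\Mod((\Phi_\Pi)^L)}$. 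For the second equality I simply take cardinalities on both sides of the Lemma's identity~$\Mod((\sddcompl)^L) = \SMod(\assume{\lp}{\ass})$, giving $\Card{\Mod((\Phi_\Pi)^L)} = \Card{\SP[\assume{\lp}{\ass}]}$.

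For the tight case, the one point that needs a line of justification is that tightness transfers from~$\lp$ to~$\assume{\lp}{\ass} = \lp \cup \ic(L)$: every rule in~$\ic(L)$ has head~$\bot$, so in~$\mathit{DP}(\assume{\lp}{\ass})$ it can contribute only edges whose target is~$\bot$; since~$\bot$ never occurs in a positive body it is a sink of~$\mathit{DP}(\assume{\lp}{\ass})$, hence no new edge lies on a cycle and~$\mathit{cycles}(\assume{\lp}{\ass}) = \mathit{cycles}(\lp) = \emptyset$. Then the recalled equivalence~$\AS = \SP$ for tight programs applies to~$\assume{\lp}{\ass}$, and chaining it with the two equalities above gives~$\val((\mathcal{G}_\Pi)^L) = \Card{\AS[\assume{\lp}{\ass}]}$.

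For the running time I would note that~$\val$ and its conditioned variant are both obtained by a single post-order traversal of~$\mathcal{G}_\Pi$: flagging the leaves whose literal contradicts~$L$ costs~$O(1)$ per leaf, and each internal node performs one addition or multiplication per incoming edge, so the total work is linear in the number of edges of~$\sddcompl$, i.e.\ in~$|\sd|$.

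I do not expect a genuine obstacle here; the argument is essentially bookkeeping on top of the Lemma. The only two spots worth making explicit are the tightness-preservation observation (so that the tight case really concerns~$\assume{\lp}{\ass}$ rather than~$\lp$) and the remark that replacing a leaf's value by~$0$ during conditioning leaves intact the invariant that~$\val$ aggregates model counts, which the preliminaries already grant.
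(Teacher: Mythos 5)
Your proposal is correct and follows essentially the same route as the paper, which treats this corollary as an immediate consequence of the preceding Lemma together with the recalled facts about counting graphs, conditioning, and the coincidence of answer sets and supported models for tight programs. Your explicit check that tightness transfers from~$\lp$ to~$\assume{\lp}{\ass}$ (since the added integrity constraints only create edges into~$\bot$, which is a sink of the positive dependency graph) is a small detail the paper leaves implicit, but it does not change the argument.
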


\begin{example}
  Consider program~$\Pi_1$ from Example~\ref{ex:running}, which has two supported models~$\{a,b\}$ and $\{a,b,c\}$.
  Without setting $\val(c)$ to~$0$ in Figure~\ref{fig:cg}, we would
  obtain~$2$, which corresponds to these two models.
  By assumption~$\neg c$, we set $\val(c)$ to~$0$, which results in a
  total count of~$1$ as the $\wedge$-node gives only one count in
  the subgraph.
\end{example}

\subsection{Compressing Counting Graphs}
When computing the counting graph of the completion of a
program~$\Pi$, \myhighlight{%
  in practice, we usually construct a CNF of the completion by
  introducing so-called nogoods~\cite{GebserKaufmannSchaub12a} similar
  to Tseitin's transformation~\shortcite{Tseitin83}.
}%
It is well-known that there is a one-to-one correspondence, however,
auxiliary variables are introduced,
\myhighlight{see,~e.g.,~\cite{KuiterKrieterSundermann23a}.}  For counting,
the one-to-one correspondence immediately allows to establish a
bijection between the models of the CNF and the supported models
making it practicable on CNFs.

However, from Corollary~\ref{corr:spcount}, we know that the runtime
counting models on~$(\mathcal{G}_\Pi)^L$ depends on the size of~$\sd$.
In consequence, introducing auxiliary variables affects the runtime of
our approach.
To this end, we introduce a compressing technique in
Algorithm~\ref{alg:ccgstd} that takes a counting
graph~$\mathcal{G}_\Pi$ and produces a \emph{compressed counting
  graph} (CCG)~$\tau(\mathcal{G}_\Pi)$, thereby removing auxiliary variables that
have been introduced by the Tseitin transformation. %
\begin{algorithm}[t]
  \textbf{In}: sd-DNNF $\sd$, $\A$\\
  \textbf{Out}: Compressed counting graph $\tau(\mathcal{G}_\Pi)$
\begin{algorithmic}[1] %
\STATE initialize array~$\mathtt{t}$ and traverse nodes~$N \in \sd$ bottom-up such that \;
\STATE \hspace{0.3cm}\textbf{if} $N$ contains a literal~$\ell \in \mathcal{L}(\Pi)$
	\textbf{then} label~$N$ with~$\val(N)$\;
\STATE \hspace{0.3cm}\textbf{else if} $N$ contains a literal~$\ell \notin \mathcal{L}(\Pi)$ \textbf{then} mark~$N$ as~$\mathtt{ignored}$\label{alg:aux}\;
\STATE 
\hspace{0.3cm}\textbf{else} check the number of children of~$N$ that are not marked as~$\mathtt{ignored}$\;
\STATE 
\hspace{0.6cm}\textbf{if} $N$ has no remaining children \textbf{then} mark~$N$ as~$\mathtt{ignored}$\label{alg:case:start} \label{alg:case:nochild}\;
\STATE 
\hspace{0.6cm}\textbf{else if} $N$ has one remaining child~$C$ \textbf{then}~$N
	\gets C$ and mark~$N$ as~$\mathtt{ignored}$ \label{alg:case:absorb}\; %
\STATE 
\hspace{0.6cm}\textbf{else}~$v \gets \val(N)$ w.r.t.~$\mathtt{t}$ and remaining children of ~$N$ and label~$N$ with~$v$\label{alg:case:end}\;
\STATE 
\hspace{0.3cm} add~$N$ to~$\mathtt{t}$\label{alg:add}\;
\STATE 
remove all nodes marked with~$\mathtt{ignored}$ from~$\mathtt{t}$\label{alg:remove}\;
\STATE 
\textbf{return}~$\mathtt{t}$\;
\end{algorithmic}
\caption{Counting Graph Compression}
\label{alg:ccgstd}
\end{algorithm}
The algorithm %
takes as input an sd-DNNF $\sddcompl$,
and literals $\A$; and returns the
compressed counting graph $\tau(\mathcal{G}_\Pi)$.  In
Line~\ref{alg:aux}, we check whether the literal node consists of an
auxiliary variable, and if so, it will be ignored.
The case distinction in Lines~\ref{alg:case:start}--\ref{alg:case:end}
distinguishes how many not ignored children a non-literal node still
has. Remember that each non-literal node is either an $\wedge$-node or
an $\vee$-node.  In Line~\ref{alg:case:nochild}, the node can be
removed, as it has no child.  In Line~\ref{alg:case:absorb}, the node
needs to be absorbed, as it has only one child meaning that the
node ultimately becomes its child.
In all other cases (Line~\ref{alg:case:end}), the node needs to be
evaluated on the CCG~$\mathtt{t}$
such that the ignored nodes are treated as neutral element of the
respective sum or product.  Ignored nodes are then removed
from~$\mathtt{t}$.
It remains to show that compressing~$\mathcal{G}_\Pi$ leaves $\val$ unchanged, which is the topic of the following statement and subsequent proof.
\begin{lemma}\label{thm:stdccg}
  Let~$\lp$ be a program, 
  $\Phi_\Pi$ an sd-DNNF of~$\compl(\Pi)$ after a transformation that
  preserves the number of models, but introduces auxiliary variables,
  and $\mathcal{G}_\Pi$ its counting graph.  %
  Then, $\val(\tau(\mathcal{G}_\Pi)) = \val(\mathcal{G}_\Pi)$
	and $\tau(\mathcal{G}_\lp)$ can be constructed in time~$\mathcal{O}(2 \cdot
  |\sd|)$.
\end{lemma}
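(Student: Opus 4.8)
The plan is to prove, by a single induction along the bottom-up order used by Algorithm~\ref{alg:ccgstd}, that the value computed at every node that survives compression equals the value $\mathcal{G}_\Pi$ assigned to it; applied at the root this gives $\val(\tau(\mathcal{G}_\Pi))=\val(\mathcal{G}_\Pi)$, and the runtime bound then falls out of the fact that the algorithm is essentially one traversal plus one deletion pass. Write $X$ for the auxiliary variables, so $\vars(\Phi_\Pi)=\at(\lp)\cup X$ with $X$ disjoint from $\at(\lp)$, and for a node $N$ of $\mathcal{G}_\Pi$ let $\varphi_N$ be the subformula rooted at $N$; recall (applying the well-known fact to subformulas) that $\val(N)$ is the model count of $\varphi_N$ over $\vars(\varphi_N)$, and assume as usual — or enforce by a linear-time simplification — that every $\varphi_N$ is satisfiable. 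The induction rests on two facts. First, since the transformation preserves the number of models of $\compl(\lp)$, the projection $M\mapsto M\cap\at(\lp)$ is a \emph{bijection} from $\Mod(\Phi_\Pi)$ onto $\SP$; equivalently, the variables in $X$ are functionally determined by $\at(\lp)$. Second, and this is the structural lemma I would isolate first: every node $N$ with $\vars(\varphi_N)\subseteq X$ has $\val(N)=1$.

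For the structural lemma, suppose $\vars(\varphi_N)\subseteq X$ but $\val(N)\ge 2$, so $\varphi_N$ has two distinct models $\alpha\ne\beta$ over $\vars(\varphi_N)$. Climbing the DAG from $N$ to a root, one can complete $\alpha$ to a model $M_\alpha$ of $\Phi_\Pi$ by choices \emph{not depending on $\alpha$}: at each $\wedge$-node on the path append a fixed model of each sibling subtree (such a model exists by the satisfiability assumption, and lives on variables disjoint from $\vars(\varphi_N)$), and at each disjunction on the path take the branch containing $N$ and fix its decision variable accordingly. Doing the same for $\beta$ yields models $M_\alpha,M_\beta$ of $\Phi_\Pi$ that agree on every variable outside $\vars(\varphi_N)$, hence on all of $\at(\lp)$, yet are distinct — contradicting injectivity of the projection. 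Hence $\val(N)=1$. In particular, every node marked $\mathtt{ignored}$ acts as a neutral element of the surrounding product, since ignored literal leaves trivially have value $1$ too.

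Now the induction. The invariant I would maintain after processing a node $N$ is: if $N$ carries a value (an original-atom literal leaf, or the case of Line~\ref{alg:case:end}) then that value is $\val(N)$; if $N$ is marked $\mathtt{ignored}$ in Line~\ref{alg:aux} or Line~\ref{alg:case:nochild} then $\val(N)=1$; and if $N$ is absorbed in Line~\ref{alg:case:absorb} into a child $C$ then $\val(N)=\val(C)$, so that references to $N$ from above may safely be redirected to $C$. Literal leaves are immediate. For a non-literal $N$: if $N$ has no unignored child, then by the invariant $\vars(\varphi_N)\subseteq X$, Line~\ref{alg:case:nochild} fires, and the structural lemma gives $\val(N)=1$; such an $N$ is either a child of a $\wedge$-node, where dropping the unit factor leaves the product unchanged, or a child of a $\vee$-node whose other child (by smoothness, same variable set) is all-auxiliary as well, so that $\vee$-node also reaches Line~\ref{alg:case:nochild} in its turn. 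If $N$ has exactly one unignored child (Line~\ref{alg:case:absorb}), then $N$ must be a $\wedge$-node — a $\vee$-node has two children and, by smoothness, if one of them is ignored so is the other — and its remaining children are value-$1$ literals or value-$1$ ignored subtrees, so $\val(N)=\val(C)$ and splicing $N$ out of the DAG is sound. If $N$ has at least two unignored children (Line~\ref{alg:case:end}): for a $\wedge$-node the recomputed product over the unignored children equals the original product, since each ignored child contributed a factor $1$; for a $\vee$-node, which then has no ignored child, the recomputed sum equals the original $\val(N)$. Reading off the invariant at the root — labelled, or absorbed into the surviving node (the case $\at(\lp)=\emptyset$ being trivial) — and noting that the final deletion (Line~\ref{alg:remove}) only removes value-$1$ leaves and subtrees and finalizes the absorptions, yields $\val(\tau(\mathcal{G}_\Pi))=\val(\mathcal{G}_\Pi)$.

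The step I expect to be the real obstacle is the soundness of the two cleanup rules, Lines~\ref{alg:case:nochild} and~\ref{alg:case:absorb}: deleting or contracting a node could a priori change the value handed to its parent, and the argument turns entirely on the structural lemma (every fully auxiliary subtree evaluates to $1$) together with the smoothness observation that a disjunction loses all or none of its children, so that no hidden $+1$ is ever dropped from a sum. The time bound is the easy part: the algorithm performs one bottom-up pass, spending at each node time proportional to its number of children (to count unignored children and, in Line~\ref{alg:case:end}, recompute a sum or product), plus one pass (Line~\ref{alg:remove}) to delete the $\mathtt{ignored}$ nodes; hence every node and every edge is handled a constant number of times, which gives the stated $\mathcal{O}(2\cdot|\sd|)$.
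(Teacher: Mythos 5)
Your proof is correct and follows the same overall skeleton as the paper's own argument --- a bottom-up case analysis over which branch of Algorithm~\ref{alg:ccgstd} fires at each node, followed by the two-traversal runtime observation --- but it is genuinely more careful in the two places where the paper's proof is thin. First, you isolate and actually \emph{prove} the structural fact that every purely auxiliary subtree has value exactly $1$, by completing two hypothetical models of such a subtree to two models of $\Phi_\Pi$ that agree on $\at(\lp)$ and contradicting injectivity of the projection onto $\SP$; the paper only invokes the bijection and asserts that removed children ``can be set to the neutral element.'' Second, and more substantively: for $\vee$-nodes the paper claims a removed child contributes the neutral element of addition, i.e.\ $0$, but a satisfiable all-auxiliary subtree has value $1$, not $0$, so literally deleting it from a sum would change the count. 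Your smoothness observation --- the two children of a disjunction have identical variable sets, so either both are all-auxiliary (and the disjunction itself is ignored) or neither child is ever dropped --- is exactly what is needed to show that this problematic case never arises; it repairs, rather than merely reproduces, the paper's treatment of that case. What your version buys is rigor at the cost of two explicit side conditions the paper leaves implicit (satisfiability of all subformulas, and the fact that the projection of $\Mod(\Phi_\Pi)$ onto $\at(\lp)$ is onto $\SP$, so that model-count preservation yields injectivity); both are harmless for Tseitin-style completions and worth stating. The runtime argument is the same in both proofs.
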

\begin{proof}
  Let
  $\mathcal{G}_\Pi$ be the counting graph of an sd-DNNF that is
  equivalent to the CNF that has been constructed
  from~$\compl(\Pi)$ using a transformation that preserves the number
  of models, which usually is the Tseitin transformation.
  We show that the value~$\val(N)$ of each
  node~$N$ of
  $\mathcal{G}_\Pi$, which is not removed
  in~$\tau(\mathcal{G}_\Pi)$, does not change, since
  for~$N$ and its respective
  children~$\mathit{children}(N)$ in Algorithm~\ref{alg:ccgstd} we
  modify only literals that occur in the program~$\Pi$.
  By $N_{\tau} \in
  \tau(\mathcal{G}_\Pi)$ we denote the modified version
  of~$N$, and by
  $\mathit{children}_{\tau}(N)$ we denote the children
  of~$N$ in
  $\tau(\mathcal{G}_\Pi)$. 
  We distinguish the cases:
  \begin{enumerate}
  \item Suppose $N$ is a literal node. Let
    $\ell$ denote the corresponding literal.  If $\ell \not \in
    \A$, then $N$ is removed in
    $\tau(\mathcal{G}_\Pi)$, thus by contraposition, we know that, if
    $N$ is not removed in $\tau(\mathcal{G}_\Pi)$, then $\ell \in
    \A$.  Assume $\ell \in \A$. Then $N =
    N_{\tau}$. Therefore, $\val(N) = \val(N_{\tau}) \in \{0,1\}$.

  \item Suppose $N$ is not a literal node. Then, since $N$ is an $\wedge$- or an
  $\vee$-node, we know that $|\mathit{children}(N)| \geq 2$.  However, in general
  $0 \leq |\mathit{children}_{\tau}(N)| \leq |\mathit{children}(N)|$.
  \begin{enumerate}
  \item\label{case:0} Assume $|\mathit{children}_{\tau}(N)| =
    0$. Then, in Algorithm~\ref{alg:ccgstd},
    $N$ will be ignored and thus not belong to
    $\tau(\mathcal{G}_\Pi)$.
  \item\label{case:1} Assume $|\mathit{children}_{\tau}(N)| =
    1$. Then, in Algorithm~\ref{alg:ccgstd},
    $N$ will be absorbed by its only child. Thus,
    $N$ does not belong to $\tau(\mathcal{G}_\Pi)$.
  \item\label{case:2} Assume $|\mathit{children}_{\tau}(N)| \geq
    2$. Then in Algorithm~\ref{alg:ccgstd},
    $N$ will be evaluated on
    $\mathit{children}_{\tau}(N)$, which means
    $N_{\tau}$ will be contained in $\tau(\mathcal{G}_\Pi)$.
    We now need to show that $\val(N)$ on\linebreak
    $\mathit{children}(N)$ corresponds to
    $\val(N)$ on $\mathit{children}_{\tau}(N)$,~i.e., $\val(N) =
    \val(N_{\tau})$. By assumption (number of models is preserved), we
    have a bijection between $M(\Phi_\Pi)$ and
    $\SP[\lp]$ which ignores auxiliary variables.  
    Therefore, we can simply set the values of children
    $\mathit{children}(N)$ that have been removed or absorbed due to
    Cases~\ref{case:0},~\ref{case:1}, or~\ref{case:2} -- as a
    consequence of removing auxiliary variables -- to the
    corresponding neutral element of the value of $N$.
    \begin{enumerate}
      \item Assume $N$ is an $\wedge$-node.
      Accordingly, in Algorithm~\ref{alg:ccgstd}, $N$
      will be evaluated on\linebreak $\mathit{children}_{\tau}(N)$ such that in the
      product corresponding to $\val(N)$, the value of each
      removed branch (removed child), due to removing auxiliary variables,
      corresponds to the neutral element of multiplication, i.e.,~$1$.
      Therefore, we conclude that $\val(N) = \val(N_{\tau})$. 
      \item Assume $N$ is an $\vee$-node. 
      Again, accordingly, in
      Algorithm~\ref{alg:ccgstd}, $N$ will be evaluated on
      $\mathit{children}_{\tau}(N)$ such that in the sum corresponding to
      $\val(N)$, the value of each removed branch (removed child), due
      to removing auxiliary variables, corresponds to the neutral element of
      addition, i.e.,~$0$. Therefore, $\val(N) =
      \val(N_{\tau})$, which concludes the proof.
    \end{enumerate}
    
  \end{enumerate}
  \end{enumerate} 

  \noindent Inspecting Algorithm~\ref{alg:ccgstd}, we see that we
  require two traversals of the original counting graph, one from
  Lines~\ref{alg:aux}--\ref{alg:add} and another one in
  Line~\ref{alg:remove} where we remove the nodes that do not belong
  to the CCG.
  Runtime follows from the fact that we need to traverse~$\sd$ twice.
\hfill
\end{proof}

\begin{corollary}
  Let $\lp$ be a tight program, then~$\val(\tau(\mathcal{G}_\Pi)) = |\AS[\lp]|$.
\end{corollary}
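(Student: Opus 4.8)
The plan is to chain the results already established in this section, with essentially no new work. First I would apply Lemma~\ref{thm:stdccg} to the tight program~$\lp$: its hypothesis only asks for an sd-DNNF~$\Phi_\Pi$ of~$\compl(\Pi)$ obtained via a transformation that preserves the number of models (the Tseitin-style construction used throughout), which is exactly the present setting. Hence $\val(\tau(\mathcal{G}_\Pi)) = \val(\mathcal{G}_\Pi)$.

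Next I would invoke Corollary~\ref{corr:spcount} with the empty assumption set~$L=\emptyset$. Since conditioning on~$L=\emptyset$ does not replace any literal, it is the identity operation on the counting graph, i.e.\ $(\mathcal{G}_\Pi)^{\emptyset}=\mathcal{G}_\Pi$, and moreover $\assume{\lp}{\emptyset}=\lp$. Corollary~\ref{corr:spcount} therefore gives $\val(\mathcal{G}_\Pi)=\Card{\SP[\lp]}$, and, because $\lp$ is tight, the same corollary already yields $\val(\mathcal{G}_\Pi)=\Card{\AS[\lp]}$; alternatively one may appeal to the classical fact recalled in the preliminaries that $\AS[\lp]=\SP[\lp]$ whenever $\lp$ is tight.

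Combining the two equalities gives $\val(\tau(\mathcal{G}_\Pi))=\val(\mathcal{G}_\Pi)=\Card{\AS[\lp]}$, which is the claim. There is no genuine obstacle here, as the statement is a direct specialisation of earlier results; the only point worth spelling out explicitly is that conditioning on the empty set of assumptions leaves the counting graph unchanged, so that Corollary~\ref{corr:spcount}, stated for arbitrary~$L$, can be applied to the unconditioned graph~$\mathcal{G}_\Pi$.
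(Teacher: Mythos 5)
Your proof is correct and matches the paper's intent exactly: the corollary is stated without proof precisely because it is the direct chaining of Lemma~\ref{thm:stdccg} with Corollary~\ref{corr:spcount} at $L=\emptyset$ (using $\AS[\lp]=\SP[\lp]$ for tight programs) that you spell out. Your explicit remark that conditioning on the empty assumption set is the identity on the counting graph is a reasonable detail to make the application of Corollary~\ref{corr:spcount} airtight.
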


\section{Incremental Counting by Inclusion-Exclusion}\label{sec:inc}
In the previous section, we illustrated how counting on tight programs
works and introduced a technique to speed up practical counting.
To count answer sets of a non-tight program, we need to distinguish
supported models from answer sets
on~$\tau(\mathcal{G}_\Pi)$, which can become quite tedious.
Therefore, we use the positive dependency graph
$\mathit{DP}(\lp)$ of $\lp$. A set $X \subseteq
\at(\lp)$ of atoms is an answer set, whenever it can be derived from
$\lp$ in a finite number of steps. In particular, the mismatch between
answer sets and supported models is caused by atoms~$C \in
\mathit{cycles}(\lp)$ involved in cycles
in $\mathit{DP}(\lp)$ that are not supported by atoms from outside the cycle. We
call those supporting atoms of $C$ the \emph{external support} of~$C$.

\begin{definition}\label{def:es}
  Let $\lp$ be a program and $r \in \lp$. 
  An atom~$a \in B^+(r)$ %
  is an \emph{external support}
  of $C \in \mathit{cycles}(\lp)$, whenever
  $H(r) \subseteq C$ %
	and $B^+(r) \cap C = \emptyset$. %
  By $\mathit{ES}(C)$ we denote the set of all external supports of~$C$.
\end{definition}

\noindent Next, we illustrate the effect of external supports on the answer sets derivation.
\begin{example}
  Let $\lp_2 = \{a \leftarrow b;
    b \leftarrow a;
    a \leftarrow c;
    c \leftarrow \pneg d;
    d \leftarrow \pneg c\}$.
    The positive dependency graph of $\lp_2$ is given in Figure~\ref{fig:cyc}.
  We obtain a cycle $C = \{a,b\}$ due to rules $a \leftarrow b$ and $b
  \leftarrow a$ with 
 external support $\mathit{ES}(C) = \{c\}$ due to rule $a
  \leftarrow c.$  
  However,
 due to rules $c \leftarrow \pneg d$ and $d
  \leftarrow \pneg c$, we see that whenever $d$ is true,
 $c$ is false, so
  that $d$ deactivates the support of $C$, which means that
 $\{a,b,d\}$ cannot
  be derived from $\lp_2$ in a finite number of steps.
 Accordingly, we have
  $\SP[\lp_2] = \{\{a,b,c\}, \{a,b,d\}, \{d\}\}$, but
 $\AS[\lp_2] =
  \{\{a,b,c\}, \{d\}\}$.
\end{example}
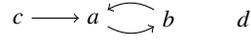
\begin{figure}[t]
\centering
\bigskip
\begin{tikzpicture}[
grow=right,
level distance=200mm,
sibling distance=105mm]
    \node[draw=none] (A) at (0,-1) {$c$};
    \node[draw=none] (B) at (1,-1){$a$};
    \node[draw=none] (C) at (2,-1){$b$};
    \node[draw=none] (D) at (3,-1) {$d$};

    \path [->](A) edge node[right] {} (B);
    \path [->, bend right](B) edge node[right] {} (C);
    \path [->, bend right](C) edge node[right] {} (B);

\end{tikzpicture}
\caption{The positive dependency graph of~$\lp_2$.}
\label{fig:cyc}
\end{figure}
Note that external supports are sets of
atoms. However, we can simulate such a set
by introducing an auxiliary atom; hence one atom,
as in this definition, is
sufficient~\cite{GebserKaufmannSchaub12a}.
\begin{example}
  Let $a \leftarrow b$, $b \leftarrow a$, and $b \leftarrow c, \pneg
  d$ be rules.  Then the external support of atoms
  $\{a,b\}$, which are involved in cycles, is $\{c\}$. If instead of $b \leftarrow c,
  \pneg d$ we use two alternative rules $b_r \leftarrow c, \pneg
  d$ and $b \leftarrow b_r$, we have $\mathit{ES}(\{a,b\}) = \{b_r\}$.%
\end{example}

To approach the answer set count of a non-tight program under
assumptions, we employ the well-known \emph{inclusion-exclusion principle},
which is a counting technique to determine the number of elements in a finite
union of finite sets $X_1, \dots, X_n$. 
Therefore, first the cardinalities of the singletons are summed up.
Then, to compensate for potential overcounting, the cardinalities of
all intersections of two sets are subtracted. Next, the number of
elements that appear in at least three sets are added back,~i.e., the
cardinality of the intersection of all three sets -- to compensate for
potential undercounting -- and so on.
As an example, for three sets $X_1, X_2, X_3$ the procedure can be expressed as
$|X_1 \cup X_2 \cup X_3| = |X_1| + |X_2| + |X_3| - |X_1 \cap X_2| - |X_1 \cap X_3| - |X_2 \cap X_3|
  + |X_1 \cap X_2 \cap X_3|$.
This principle can be used to count answer sets via %
supported model counting. 

Next we define a notion that is useful to identify or prune supported models 
that are not stable.
\begin{definition}
We define the \emph{unsupported constraint} for a set $C = \{c_0,
\dots, c_n\} \in \mathit{cycles}(\lp)$ of atoms involved in cycles and its respective\
external supports $\mathit{ES}(C) = \{ s_0, \dots, s_m\}$ by the rule $\lambda(C) \defi
\bot \leftarrow c_0, \dots, c_n, \pneg s_0, \dots, \pneg s_m$.
\end{definition}
\myhighlight{The unsupported constraints as defined here, (i) are inspired by \emph{loop formulas}~\cite{LinZ04,ferraris2006generalization}; and (ii) contain the whole set~$C$, which is
slightly weaker than constraints (nogoods) defined in related
work~\cite{GebserKaufmannSchaub12a}, but sufficient for %
characterizing answer sets.} %
\begin{lemma}\label{obs:cycles}
  Let $\lp$ be a program with cycles~$\mathit{cycles}(\lp) = \{C_1,\dots,C_n\}$, then
  \begin{align*}
    \AS[\Pi] = \SP[\lp \cup \{ \lambda(C_1), \dots, \lambda(C_n) \} ].
  \end{align*}
\end{lemma}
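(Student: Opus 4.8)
The plan is to show the two inclusions $\AS[\Pi]\subseteq\SP[\Pi\cup\{\lambda(C_1),\dots,\lambda(C_n)\}]$ and the reverse, using the fact (already recalled in the preliminaries) that $\AS[\Pi]\subseteq\SP[\Pi]$ in general, together with a characterization of which supported models fail to be answer sets via cycles of $\mathit{DP}(\Pi)$ lacking active external support. Write $\Pi' \defi \Pi\cup\{\lambda(C_1),\dots,\lambda(C_n)\}$. First I would observe that, since each $\lambda(C_i)$ is a constraint (head $\bot$), adding it to $\Pi$ changes neither $\mathit{DP}$ nor the GL-reduct restricted to non-$\bot$ rules, so $\AS[\Pi']\subseteq\AS[\Pi]$ and $\SP[\Pi']\subseteq\SP[\Pi]$; moreover an interpretation $I$ satisfies $\compl(\Pi')$ iff it satisfies $\compl(\Pi)$ and, for every $i$, $I$ violates the body of $\lambda(C_i)$, i.e.\ it is \emph{not} the case that $C_i\subseteq I$ and $\mathit{ES}(C_i)\cap I=\emptyset$. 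This is the analogue of Equation~\ref{eqn:compl-ass} for constraints and follows by the same completion-of-a-constraint computation used in the proof of the previous lemma.

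For the inclusion $\AS[\Pi]\subseteq\SP[\Pi']$: let $I\in\AS[\Pi]$. Then $I\in\SP[\Pi]$, so $I\models\compl(\Pi)$. It remains to check that $I$ satisfies each constraint $\lambda(C_i)$, i.e.\ that no cycle $C_i$ with $C_i\subseteq I$ has all of its external supports false in $I$. Suppose for contradiction some $C_i\subseteq I$ with $\mathit{ES}(C_i)\cap I=\emptyset$. The idea is that then the atoms of $C_i$ cannot be ``derived'' in the reduct $\reduct{\Pi}{I}$: every rule $r\in\Pi$ with $H(r)\subseteq C_i$ either is deleted in the reduct (some $B^-(r)\cap I\neq\emptyset$), or has $B^+(r)\cap C_i\neq\emptyset$ (a ``cycle-internal'' rule), or is an external-support rule, which by $\mathit{ES}(C_i)\cap I=\emptyset$ has an unsatisfied positive body atom in $I$. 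Hence $I\setminus C_i$ still satisfies $\reduct{\Pi}{I}$ — the only rules that could be violated are those with head in $C_i$ and such a rule, to be violated, must have its whole (positive, reduct) body in $I\setminus C_i$, contradicting the case analysis — so $I$ is not subset-minimal, contradicting $I\in\AS[\Pi]$.

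For the reverse inclusion $\SP[\Pi']\subseteq\AS[\Pi]$: let $I\in\SP[\Pi']$. Then $I\in\SP[\Pi]$, hence $I$ is a model of $\reduct{\Pi}{I}$. Suppose $I\notin\AS[\Pi]$; then there is a proper subset $J\subsetneq I$ with $J\models\reduct{\Pi}{I}$. Take such $J$ and let $D\defi I\setminus J\neq\emptyset$. The plan is to extract from $D$ a nonempty set of atoms lying on a cycle of $\mathit{DP}(\Pi)$ whose external supports are all false in $I$, thereby contradicting the fact that $I$ satisfies all $\lambda(C_i)$. Concretely, for each $a\in D$, since $I$ (but not $J$) satisfies the reduct, the rules with head $a$ that are ``active'' under $I$ (surviving the reduct, with positive body in $I$) must all have a positive body atom in $D$ — otherwise such a rule would be applicable from $J$, forcing $a\in J$. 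Iterating this, one finds that $D$ supports an infinite descending/cyclic chain inside $\mathit{DP}(\Pi)$, so there is a cycle $C\subseteq D\subseteq I$; and the surviving rules with head in $C$ all have positive body meeting $C$, i.e.\ no external support of $C$ is satisfied — so $\mathit{ES}(C)\cap I=\emptyset$ with $C\subseteq I$, violating $\lambda(C)$, contradiction. (Some care is needed because cycles in the lemma are vertex sets $V_{C'}$ of cycles in $\mathit{DP}(\Pi)$; one restricts to the strongly connected components of the subgraph of $\mathit{DP}(\Pi)$ induced by $D$ that have positive out-/in-degree to pin down an honest cycle.)

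The main obstacle is the reverse inclusion, specifically the ``unfounded set'' argument: turning the existence of a strictly smaller model $J$ of the reduct into an \emph{actual cycle} $C$ in $\mathit{DP}(\Pi)$ (a vertex set in $\mathit{cycles}(\Pi)$, not merely an unfounded set) all of whose external supports are false in $I$. This is essentially the classical equivalence between unfounded sets and loop formulas~\cite{LinZ04,ferraris2006generalization}; I would either cite it or spell out the finite-graph argument that an unfounded set contained in a model of the completion must contain a cycle, and that the unsupported constraint for that cycle (with the full $C$ rather than a single loop atom) is exactly what is needed. Everything else — the behaviour of $\compl$ under adding constraints, and the preservation of $\mathit{DP}$ and the reduct — is routine given the lemma proved just above.
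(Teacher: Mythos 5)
Your proof takes essentially the same route as the paper's: both rest on the characterization that the supported models of $\Pi$ failing to be answer sets are exactly those containing a cycle of $\mathit{DP}(\Pi)$ all of whose external supports are false in the model, which is precisely what the constraints $\lambda(C_1),\dots,\lambda(C_n)$ exclude. The paper's own proof is only a two-sentence sketch of this correspondence, so your version --- which actually carries out both inclusions, including the unfounded-set-to-cycle argument for the reverse direction --- fills in details the paper leaves implicit rather than departing from its approach.
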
%
\begin{proof}%
  Recall that $\AS[\Pi]\subseteq \SP[\Pi]$.  However, supported models
  -- in particular those that are not answer sets -- might contain a
  cycle~$C = \{c_0, \dots, c_m\} \in \mathit{cycles}(\lp)$ without
  external support from $\mathit{ES}(C) = \{ s_0, \dots, s_k\}$, which
  are precisely those supported models we exclude by adding a
  rule
  \[ \bot \leftarrow c_0, \dots, c_m, \pneg s_0, \dots, \pneg s_k \]
  in the form of unsupported constraints $\lambda(C)$ to $\lp$ for
  each~$C \in \mathit{cycles}(\lp)$.  This ensures that atoms involved in cycles
  are not present without external support in any supported model,
  which provides us with supported models that are answer sets.
\end{proof}

\begin{example}\label{ex:cycles}
  Let
  $\lp_3 = \lp_2 \cup \{b \leftarrow g; f \leftarrow g; e \leftarrow
  f; f \leftarrow e\}$, which has two cycles $C_0 = \{a,b\}$ and
  $C_1 = \{e,f\}$.  Their corresponding external supports are
  $\mathit{ES}(C_0) = \{c,g\}$ and $\mathit{ES}(C_1) =
  \{g\}$. Accordingly, we have unsupported
  constraints~$\lambda(C_0) = \bot \leftarrow a,b,\pneg c,\pneg g$
  and~$\lambda(C_1) = \bot \leftarrow e,f,\pneg g$.
  Figure~\ref{fig:cyc3} illustrates the positive dependency graph of
  program~$\lp_3$.
\end{example}

\begin{figure}[t]
  \centering
  \begin{tikzpicture}[grow=right,level distance=200mm,sibling distance=105mm]
    \node[draw=none] (A) at (0,-1) {$c$};
    \node[draw=none] (B) at (1,-1){$a$};
    \node[draw=none] (C) at (2,-1){$b$};
    \node[draw=none] (H) at (3,-1) {$g$};
    \node[draw=none] (G) at (4,-1){$e$};
    \node[draw=none] (F) at (5,-1){$f$};
    \node[draw=none] (D) at (6,-1) {$d$};

    \path [->](A) edge node[right] {} (B);
    \path [->, bend right](B) edge node[right] {} (C);
    \path [->, bend right](C) edge node[right] {} (B);
    \path [->](H) edge node[right] {} (C);

    \path [->](H) edge node[right] {} (G);

    \path [->, bend right](G) edge node[right] {} (F);
    \path [->, bend right](F) edge node[right] {} (G);
  \end{tikzpicture}
  \caption{The positive dependency graph of~$\lp_3$ from Example~\ref{ex:cycles}.}
  \label{fig:cyc3}
\end{figure}
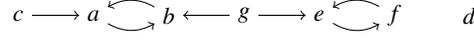

Before we discuss our approach on incremental answer set counting, we
need some further notation.  From now on, by
$\Lambda_{d}(\lp) \defi \{\{\lambda(C_1), \dots, \lambda(C_d)\} \mid
\{C_1, \dots, C_d\} \subseteq \mathit{cycles}(\lp)\}$ we denote the
set of all combinations of unsupported constraints of cycles that
occur in any subset of $\mathit{cycles}(\lp)$ with cardinality
$0\leq d\leq n$, where $n \defi \Card{\mathit{cycles}(\lp)}$.
Further, we define body literals of a set of unsupported
contraints~$\Gamma$
by~$B(\Gamma) \defi \bigcup \{ B(\lambda(C)) \mid \lambda(C) \in
\Gamma\}$.
\begin{example}[Continued]
  Consider program~$\lp_3$ from Example~\ref{ex:cycles}.  We
  have~$\Lambda_0(\lp_3) =
  \emptyset$,~$\Lambda_1(\lp_3) =
  \{\{\lambda(C_0)\},\{\lambda(C_1)\}\}$
  and~$\Lambda_2(\lp_3) = \{\{\lambda(C_0), \lambda(C_1)\}\}$.
\end{example}
\noindent
Now, we define the \emph{incremental count} of
$\Card{\AS[\assume{\lp}{\ass}]}$ by $a_d^\ass$, using the combinatorial
principle of inclusion-exclusion as follows:

\begin{align}
  a_d^\ass &\defi \sum_{i=0}^{d} (-1)^i \sum_{\Gamma \in \Lambda_{i}(\lp)}
             \Card{\SP[\assume{\lp}{{\ass \cup B(\Gamma)}}]} \\
           &=
             \Card{\SP[\assume{\lp}{\ass}]} - \sum_{\Gamma \in \Lambda_{1}(\lp)}
             \Card{\SP[\assume{\lp}{\ass \cup B(\Gamma)}]}\\
           &+ \sum_{\Gamma \in \Lambda_{2}(\lp)} |\SP[\assume{\lp}{\ass \cup B(\Gamma)}]|
             - \dots + (-1)^d \sum_{\Gamma \in \Lambda_{d}(\lp)} \Card{\SP[\assume{\lp}{\ass \cup B(\Gamma)}]}
\end{align}

\noindent By subtracting $\Card{\SP[\assume{\lp}{L}] \setminus \SP[\assume{\lp}{\ass \cup B(\Gamma)}]}$ for each $\Gamma \in
\Lambda_{1}(\lp)$ we subtract the number of supported models that are \emph{not
answer sets} under assumptions $\ass$ with respect to each cycle $C \in
\mathit{cycles}(\lp)$.  However, we need to take into account the interaction of
cycles and their respective external supports under assumptions $\ass$. Thus we
enter the first alternation step, where we proceed by adding back
$|\SP[\assume{\lp}{L}] \setminus  \SP[\assume{\lp}{\ass \cup B(\Gamma)}]|$ for each $\Gamma \in \Lambda_{2}(\lp)$, which
means that we add back the number of supported models that were mistakenly
subtracted from $\Card{\SP[\assume{\lp}{L}]}$ in the previous step, and so on, until we
went through all $\Lambda_i$ where $0 \leq i \leq d$. Note that therefore in
total we have $d$ alternations.
In general, we %
show that~$a^L_{n}=|\AS[\assume{\Pi}{L}]|$ as follows.
\begin{theorem}\label{prop:incl}
  Let $\Pi$ be a program,
  $\mathit{cycles}(\lp) = \{C_1, \dots, C_n\}$, and further
  $U \defi \{\lambda(C_1), \dots, \lambda(C_n)\}$ be the set of all
  unsupported constraints of $\lp$.  Then, for assumptions~$L$,
  \[|\SP[\assume{\lp}{\ass}\cup U]| = \sum_{i=0}^{n} (-1)^i
    \sum_{\Gamma \in \Lambda_{i}(\lp)} |\SP[\assume{\lp}{L}] \setminus
    \SP[\assume{\lp}{\ass \cup B(\Gamma)}]|\]
\end{theorem}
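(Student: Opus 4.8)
The plan is to read both sides as counts over the fixed ground set $S\defi\SP[\assume{\lp}{\ass}]$ of supported models under the assumptions, and then to invoke the \emph{dual} form of inclusion--exclusion, i.e.\ the version expressing an intersection size through sizes of \emph{unions}. First I would rewrite the left-hand side. Because each $\ic(\ell)$ has head $\bot$ and contributes no positive body atom that could close a cycle, $\mathit{cycles}(\assume{\lp}{\ass})=\mathit{cycles}(\lp)$ and $U$ is also the set of unsupported constraints of $\assume{\lp}{\ass}$. Hence Lemma~\ref{obs:cycles}, applied to $\assume{\lp}{\ass}$ in place of $\lp$, yields $\SP[\assume{\lp}{\ass}\cup U]=\AS[\assume{\lp}{\ass}]$, so the left-hand side counts answer sets under $\ass$.

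Next I would identify each summand. For a cycle $C$ put $V_C\defi\SP[\assume{\lp}{\ass\cup B(\{\lambda(C)\})}]$ and $W_C\defi S\setminus V_C$. By Observation~\ref{lem:models} together with Equation~\ref{eqn:compl-ass}, the set $\SP[\assume{\lp}{\ass\cup B(\Gamma)}]$ is exactly the set of $M\in S$ satisfying every literal of $B(\Gamma)$. For $\Gamma\in\Lambda_i(\lp)$ built from cycles $D_{1},\dots,D_{i}$ this means $D_{t}\subseteq M$ and $\mathit{ES}(D_{t})\cap M=\emptyset$ for every $t$, i.e.\ $M$ violates each $\lambda(D_{t})$; therefore $\SP[\assume{\lp}{\ass\cup B(\Gamma)}]=\bigcap_{t}V_{D_{t}}$. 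By de Morgan the corresponding summand turns into a union size, $\Card{\SP[\assume{\lp}{\ass}]\setminus\SP[\assume{\lp}{\ass\cup B(\Gamma)}]}=\Card{S\setminus\bigcap_{t}V_{D_{t}}}=\Card{\bigcup_{t}W_{D_{t}}}$. Applying the same reading to all of $U$ at once, which removes every model violating some constraint, gives $\AS[\assume{\lp}{\ass}]=S\setminus\bigcup_{j}V_{C_{j}}=\bigcap_{j}W_{C_{j}}$.

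With these two identifications the statement collapses to a purely set-theoretic fact expressing the size of an intersection through the sizes of unions over subfamilies, namely the dual inclusion--exclusion identity
\[
\Card{\,\bigcap_{j=1}^{n}W_{C_{j}}}=\sum_{\emptyset\neq\Gamma\subseteq\{C_{1},\dots,C_{n}\}}(-1)^{\Card{\Gamma}-1}\,\Card{\,\bigcup_{C\in\Gamma}W_{C}},
\]
which I would prove by indicator bookkeeping: fixing $M$ and setting $T\defi\{j:M\in W_{C_{j}}\}$, the alternating sum of the Iverson bracket $[\,\Gamma\cap T\neq\emptyset\,]$ over nonempty $\Gamma$ telescopes, via $(1-1)^{\Card{\{1,\dots,n\}\setminus T}}$, to $[\,T=\{1,\dots,n\}\,]$, i.e.\ to $M\in\bigcap_{j}W_{C_{j}}$. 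Substituting the previous paragraph's equalities then reproduces the displayed sum indexed by $\Lambda_i(\lp)$. I expect the only real obstacle to be sign and index bookkeeping rather than the combinatorial core: one must check that the degenerate index $\Gamma=\emptyset$ contributes $0$, since $B(\emptyset)=\emptyset$ makes the set difference empty so the $i=0$ layer drops out, and then align the alternation $(-1)^{\Card{\Gamma}-1}$ produced by the dual principle, layer by layer over $\Lambda_i(\lp)$, with the sign pattern written in the statement; getting this matching exactly right is the delicate step.
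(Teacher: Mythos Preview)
Your approach is sound and genuinely different from the paper's. The paper proceeds by induction on $|\mathit{cycles}(\Pi)|$: the base case is a tight program, and the step peels off one cycle $C'$ and argues, somewhat informally, that the additional terms involving $\lambda(C')$ correct the over- and under-counts produced at each depth. Your reduction to the dual inclusion--exclusion principle is more direct and conceptually cleaner: identifying $V_C$ as the set of violators of $\lambda(C)$ and $W_C$ as its complement in $S$ collapses the whole statement to a single combinatorial identity, with no induction needed. The paper's inductive argument essentially re-derives this identity in situ.

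Your caution about the sign bookkeeping is exactly the right instinct, and in fact it exposes a genuine discrepancy in the statement as printed. The dual identity you derive yields the coefficient $(-1)^{|\Gamma|-1}$, whereas the displayed formula carries $(-1)^{|\Gamma|}$; since the $i=0$ layer vanishes, the right-hand side as written equals the \emph{negative} of the left-hand side for $n\geq 1$ (and equals $0$ for $n=0$, where the left side is $|S|$). One can check this already on Example~\ref{ex:running}: with the single cycle $\{c\}$ the right-hand side evaluates to $-1$ while the left is $1$. The culprit is the set difference in the summand. The incremental count $a_d^{\ass}$ defined just above the theorem uses $|\SP[\assume{\lp}{\ass\cup B(\Gamma)}]|$ itself rather than $|\SP[\assume{\lp}{L}]\setminus\SP[\assume{\lp}{\ass\cup B(\Gamma)}]|$, and with that summand (which is $|\bigcap_{C\in\Gamma}V_C|$ in your notation) ordinary inclusion--exclusion gives $|S|-|\bigcup_j V_{C_j}|=|\bigcap_j W_{C_j}|$ with the signs exactly as written; this is what Corollary~\ref{corr:ae} actually relies on. So your plan is correct for the intended statement; the mismatch you anticipated is not a gap in your argument but a misprint in the theorem.
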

\begin{proof}%
We proceed by induction on~$|\mathit{cycles}(\Pi)|$.
\paragraph{Induction Base Case:} We assume that $|\mathit{cycles}(\Pi)|=0$.
Then, since~$\Pi$ admits no positive cycle in~$\mathit{DP}(\Pi)$, 
we have~$\AS[\assume{\Pi}{L}]=\SP[\assume{\Pi}{L}]$, and therefore~$|\AS[\assume{\Pi}{L}]|=|\SP[\assume{\Pi}{L}]|$.

	\paragraph{Induction Hypothesis (IH):} We assume that the proposition holds for
every program~$\Pi$ with a number of cycles~$|\mathit{cycles}(\Pi)|<m$.

\paragraph{Induction Step:} 
We need to show that the result holds for a program~$\Pi$
with~$|\mathit{cycles}(\Pi)|=m+1$.
Let~$C'\in \mathit{cycles}(\Pi)$ be a cycle. 
We define $U_m \defi \{\lambda(C_1), \dots, \lambda(C_m)\}$ for any~$\{C_1, \ldots, C_m\}\subseteq \mathit{cycles}(\Pi)$ such that~$|U_m| = m$ with $C_i\neq C'$ for~$C_i\in \{C_1, \ldots, C_m\}$.
Then, by IH, we have that 
	\[x \defi |\SP[\assume{\lp}{\ass \cup B(U_m)}]| = \sum_{i=0}^{m} (-1)^i \sum_{\Gamma
	\in \Lambda_{i}(\lp), \lambda(C')\notin\Gamma} |\SP[\assume{\lp}{L}] \setminus
	\SP[\assume{\lp}{\ass \cup B(\Gamma)}]|\]
To $x$, %
the formula $\sum_{i=0}^{m+1} (-1)^i \sum_{\Gamma \in \Lambda_{i}(\lp)}
	|\SP[\assume{\lp}{L}] \setminus \SP[\assume{\lp}{\ass \cup B(\Gamma)}]|$ adds
$|\SP[\lp \cup \lambda(C')]|$. 
However, this formula then subtracts supported models satisfying
both constraints $\{\lambda(C'), \lambda(C'')\}$ with one of the cycles $\lambda(C'') \in U_m$ twice, %
which require to be added back. Thus, we proceed by adding back supported models satisfying unsupported constraints 
of $C'$ with two other cycles, which again have to be subtracted in the next step. In turn, the application 
of the inclusion-exclusion principle ensures that
\begin{align*}
  \sum_{i=0}^{m+1} (-1)^i
  \sum_{\Gamma \in \Lambda_{i}(\lp)} |\SP[\assume{\lp}{L}] \setminus \SP[\assume{\lp}{\ass
  \cup B(\Gamma)}]|\\
  =
  x +
  \sum_{i=0}^{m+1} (-1)^i \sum_{\Gamma \in \Lambda_{i}(\lp), \lambda(C')\in
  \Gamma} |\SP[\assume{\lp}{L}] \setminus \SP[\assume{\lp}{\ass \cup B(\Gamma)}]|.
\end{align*}

\hfill 
\end{proof}
Finally, one can count answer sets correctly.
\begin{corollary}\label{corr:ae}
Let $\Pi$ be a program, $L$ assumptions, and $n = |\mathit{cycles}(\lp)|$.
Then, $a_n^\ass = |\AS[\assume{\Pi}{L}]|$.
\end{corollary}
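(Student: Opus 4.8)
The plan is to obtain Corollary~\ref{corr:ae} as a short consequence of Lemma~\ref{obs:cycles} and Theorem~\ref{prop:incl}, after lifting the former to programs with assumptions and then doing a cardinality bookkeeping. First I would observe that Lemma~\ref{obs:cycles} applies unchanged to $\assume{\lp}{\ass}$ in place of $\lp$: every rule in $\ic(L)$ has head $\bot$ and empty positive body, so $\mathit{DP}(\assume{\lp}{\ass}) = \mathit{DP}(\lp)$ and hence $\mathit{cycles}(\assume{\lp}{\ass}) = \mathit{cycles}(\lp) = \{C_1,\dots,C_n\}$, with the same unsupported constraints $\lambda(C_i)$. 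Consequently $\AS[\assume{\lp}{\ass}] = \SP[\assume{\lp}{\ass}\cup U]$ where $U = \{\lambda(C_1),\dots,\lambda(C_n)\}$, so in particular $|\AS[\assume{\lp}{\ass}]| = |\SP[\assume{\lp}{\ass}\cup U]|$.

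Next I would unfold $\SP[\assume{\lp}{\ass}\cup U]$ by inclusion--exclusion, which is the content of Theorem~\ref{prop:incl}. Concretely, set $X_j \defi \SP[\assume{\lp}{\ass \cup B(\lambda(C_j))}]$; this is precisely the set of supported models of $\assume{\lp}{\ass}$ that violate $\lambda(C_j)$, i.e.\ that contain every atom of $C_j$ and none of its external supports, so that $\SP[\assume{\lp}{\ass}\cup U] = \SP[\assume{\lp}{\ass}] \setminus (X_1\cup\dots\cup X_n)$. The one point that needs care — which I expect to be the main (modest) obstacle, the rest being the already-proved Theorem~\ref{prop:incl} — is the identification $\bigcap_{j\in S} X_j = \SP[\assume{\lp}{\ass \cup B(\Gamma_S)}]$ for each $S\subseteq\{1,\dots,n\}$ with $\Gamma_S = \{\lambda(C_j)\mid j\in S\}\in\Lambda_{|S|}(\lp)$: a supported model lies in every $X_j$, $j\in S$, exactly when it simultaneously contains each cycle $C_j$ and excludes every external support of $C_j$, which is precisely what the assumptions $B(\Gamma_S) = \bigcup_{j\in S}B(\lambda(C_j))$ impose; here one also uses $\SP[\assume{\lp}{\ass\cup B(\Gamma)}]\subseteq\SP[\assume{\lp}{\ass}]$, since adding assumptions only removes models, so each $\setminus$ becomes an ordinary difference of cardinalities.

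Finally I would assemble the pieces. Inclusion--exclusion gives $|X_1\cup\dots\cup X_n| = \sum_{\emptyset\neq S}(-1)^{|S|-1}\,|\SP[\assume{\lp}{\ass\cup B(\Gamma_S)}]|$, hence $|\AS[\assume{\lp}{\ass}]| = |\SP[\assume{\lp}{\ass}]| + \sum_{\emptyset\neq S}(-1)^{|S|}\,|\SP[\assume{\lp}{\ass\cup B(\Gamma_S)}]|$. Reading the $S=\emptyset$ summand as $|\SP[\assume{\lp}{\ass}]|$ (consistent with $B(\emptyset)=\emptyset$) and grouping the subsets by size, the right-hand side is exactly the expansion of $a_n^\ass$ displayed before the statement, so $a_n^\ass = |\AS[\assume{\lp}{\ass}]|$. (Equivalently, one can plug $|\SP[\assume{\lp}{\ass}]\setminus\SP[\assume{\lp}{\ass\cup B(\Gamma)}]| = |\SP[\assume{\lp}{\ass}]| - |\SP[\assume{\lp}{\ass\cup B(\Gamma)}]|$ into Theorem~\ref{prop:incl} and note $\sum_{i=0}^n(-1)^i\binom{n}{i}=0$ for $n\ge 1$.) The remaining checks are routine: the degenerate case $n=0$, where $\lp$ is tight and $a_0^\ass = |\SP[\assume{\lp}{\ass}]| = |\AS[\assume{\lp}{\ass}]|$ by Corollary~\ref{corr:spcount}, and keeping the signs and the $i=0$ term aligned between the subset sum and the $\Lambda_i$-indexed sum defining $a_n^\ass$.
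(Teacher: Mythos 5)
Your proposal is correct, and it follows the route the paper intends: Lemma~\ref{obs:cycles} to replace $\AS[\assume{\lp}{\ass}]$ by $\SP[\assume{\lp}{\ass}\cup U]$, then inclusion--exclusion over the cycles. The paper itself offers no argument for the corollary beyond juxtaposing it with Theorem~\ref{prop:incl}, and your write-up fills in two things the paper leaves implicit. First, the lifting of Lemma~\ref{obs:cycles} to $\assume{\lp}{\ass}$; note one small slip there: for a negative assumption $\ell=\pneg a$ the rule $\ic(\ell)=\{\bot\leftarrow a\}$ has \emph{nonempty} positive body, so the constraint rules do add edges into $\bot$ in $\mathit{DP}(\assume{\lp}{\ass})$ --- the conclusion $\mathit{cycles}(\assume{\lp}{\ass})=\mathit{cycles}(\lp)$ still holds, but because $\bot$ never occurs in a positive body and hence has no outgoing edges, not because the positive bodies are empty. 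Second, and more substantively, your direct inclusion--exclusion over the union of the sets $X_j=\SP[\assume{\lp}{\ass\cup B(\lambda(C_j))}]$, with the identification $\bigcap_{j\in S}X_j=\SP[\assume{\lp}{\ass\cup B(\Gamma_S)}]$, bridges a genuine notational mismatch in the paper: Theorem~\ref{prop:incl} is stated with terms $|\SP[\assume{\lp}{\ass}]\setminus\SP[\assume{\lp}{\ass\cup B(\Gamma)}]|$ while $a_d^\ass$ is defined with $|\SP[\assume{\lp}{\ass\cup B(\Gamma)}]|$ directly (and the paper's worked example computes with the latter). Your observation that $\SP[\assume{\lp}{\ass\cup B(\Gamma)}]\subseteq\SP[\assume{\lp}{\ass}]$, so that the set differences reduce to differences of cardinalities and the surplus $|\SP[\assume{\lp}{\ass}]|\sum_{i=0}^n(-1)^i\binom{n}{i}$ vanishes for $n\ge 1$, is exactly the reconciliation needed; the paper never spells this out. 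In short, your argument is sound and, if anything, more careful than the source.
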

In fact, we can characterize~$a^{\ass}_{n}$ with respect to alternation depths.
If there is no change from one alternation to another, the point is reached where the
number of answer sets is obtained, as the following lemma states.
\begin{lemma}\label{thm:term}
  Let~$\lp$ be a program and~$L$ be assumptions. If $a^{\ass}_i =
  a^{\ass}_{i+1}$ for some integer~$i\geq 0$, then $a^{\ass}_i =
    |\mathcal{AS}(\assume{\lp}{\ass})|$.
\end{lemma}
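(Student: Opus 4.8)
I would argue by showing that the sequence $a^{\ass}_0, a^{\ass}_1, a^{\ass}_2, \dots$ stabilizes: once it stops changing at some index $i$, it never changes again, and in particular $a^{\ass}_i = a^{\ass}_n = |\mathcal{AS}(\assume{\lp}{\ass})|$ by Corollary~\ref{corr:ae}. So the real content is to prove the implication $a^{\ass}_i = a^{\ass}_{i+1} \implies a^{\ass}_{i+1} = a^{\ass}_{i+2}$, and then chain it. The cleanest way to see stabilization is to unwind the definition of $a^{\ass}_{d}$ and isolate the increment $a^{\ass}_{d+1} - a^{\ass}_{d} = (-1)^{d+1}\sum_{\Gamma \in \Lambda_{d+1}(\lp)} \Card{\SP[\assume{\lp}{\ass \cup B(\Gamma)}]}$.

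\textbf{Key steps.} First I would rewrite each term $\Card{\SP[\assume{\lp}{\ass \cup B(\Gamma)}]}$ for $\Gamma \in \Lambda_{d}(\lp)$ using Observation~\ref{lem:models} / Lemma~\ref{obs:cycles}: adding the body literals $B(\Gamma)$ as assumptions amounts to forcing every constraint $\lambda(C) \in \Gamma$ to be violated, i.e.\ $\SP[\assume{\lp}{\ass \cup B(\Gamma)}]$ is exactly the set of supported models of $\assume{\lp}{\ass}$ that lie in the "bad set" $X_C$ for every cycle $C$ with $\lambda(C)\in\Gamma$, where $X_C \defi \{ M \in \SP[\assume{\lp}{\ass}] \mid C \subseteq M,\ \mathit{ES}(C)\cap M = \emptyset\}$. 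Hence $\Card{\SP[\assume{\lp}{\ass \cup B(\Gamma)}]} = \Card{\bigcap_{C : \lambda(C)\in\Gamma} X_C}$, and $a^{\ass}_d$ is precisely the truncated inclusion–exclusion sum for $\Card{\SP[\assume{\lp}{\ass}] \setminus \bigcup_{j} X_{C_j}}$ up to intersections of size $d$. Second, observe that a supported model $M$ that fails to be an answer set lies in at least one $X_C$; so $\AS[\assume{\lp}{\ass}] = \SP[\assume{\lp}{\ass}] \setminus \bigcup_j X_{C_j}$, which is the substance already packaged in Corollary~\ref{corr:ae}. Third, and this is where the hypothesis $a^{\ass}_i = a^{\ass}_{i+1}$ enters: $a^{\ass}_i = a^{\ass}_{i+1}$ means the $(i+1)$-st correction term vanishes, i.e.\ $\sum_{\Gamma \in \Lambda_{i+1}(\lp)} \Card{\bigcap_{C\in\Gamma} X_C} = 0$. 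Since these are cardinalities of finite sets, every $(i+1)$-fold intersection of the $X_C$'s is empty. But then every $(i+2)$-fold, $(i+3)$-fold, \dots intersection is also empty (an intersection of more sets is contained in an intersection of fewer), so \emph{all} higher correction terms vanish and $a^{\ass}_i = a^{\ass}_{i+1} = a^{\ass}_{i+2} = \dots = a^{\ass}_{n}$. Finally, apply Corollary~\ref{corr:ae} to conclude $a^{\ass}_i = a^{\ass}_{n} = |\mathcal{AS}(\assume{\lp}{\ass})|$.

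\textbf{Main obstacle.} The only delicate point is making the identification $\Card{\SP[\assume{\lp}{\ass \cup B(\Gamma)}]} = \Card{\bigcap_{C\in\Gamma} X_C}$ fully rigorous — in particular that conditioning on the body literals $B(\lambda(C))$ of an unsupported constraint is equivalent to restricting to supported models in $X_C$. This requires being slightly careful because $\mathit{ES}(C)$ in Definition~\ref{def:es} may be a set of atoms while $B(\lambda(C))$ negates each element of $\mathit{ES}(C)$; and because the constraint $\lambda(C)$ is the whole-cycle (loop-formula-style) version, not the nogood version, one must double check that "$\lambda(C)$ violated" is precisely "$C \subseteq M$ and $\mathit{ES}(C) \cap M = \emptyset$". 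Once that bookkeeping lemma is in hand, the stabilization argument is purely combinatorial: nonnegativity of cardinalities forces a vanishing sum to be termwise zero, and emptiness propagates upward through intersections. I expect that the bookkeeping is already implicitly done in the proof of Lemma~\ref{obs:cycles}, so citing it should suffice, and the proof of Lemma~\ref{thm:term} itself can be kept to a few lines.
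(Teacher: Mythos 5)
Your proposal is correct and follows essentially the same route as the paper's proof: the hypothesis forces the $(i{+}1)$-st correction term to vanish, nonnegativity makes it vanish termwise, and monotonicity (larger $\Gamma$ means more assumptions, hence fewer supported models — equivalently, larger intersections of your $X_C$ sets are contained in smaller ones) propagates the vanishing to all depths $j>i$, after which Corollary~\ref{corr:ae} closes the argument. Your explicit reformulation via the sets $X_C$ only makes more precise what the paper asserts in one sentence, so the two proofs coincide in substance.
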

\begin{proof}

  Suppose $a^{\ass}_i = a^{\ass}_{i+1}$, then 
  \(\sum_{\Gamma \in \Lambda_{i+1}(\lp)} |\SP[\assume{\lp}{L}] \setminus \SP[\assume{\lp}{\ass
	\cup B(\Gamma)}]| = 0\). We can observe that therefore no further combination of unsupported constraints with set~$\ass$ of assumptions
  where we combine unsupported constraints of cycles that occur in subsets of $\mathit{cycles}(\lp)$
  with cardinality $j > i+1$
  points to any supported model. In other words, we have for all $j > i$ that
  \(\sum_{\Gamma \in \Lambda_{j}(\lp)}
	|\SP[\assume{\lp}{L}] \setminus \SP[\assume{\lp}{\ass \cup B(\Gamma)}]| = 0\),
  which concludes the proof. 
\hfill
\end{proof}

\begin{algorithm}[t]
\textbf{In}: Program~$\Pi$; assumptions~$L$; compressed counting graph
	$\tau(\mathcal{G}_\Pi)$; alternation depth $d$\\ 
\textbf{Out}: Incremental count $a_d^{\ass}$%
\begin{algorithmic}[1] %
\STATE 
$\mathtt{count} \gets \val(\tau(\mathcal{G}_\Pi)^L$) and~$c \gets 0$ %
\STATE \textbf{if}~$d$ is odd \textbf{then}~$d \gets d + 1$\; \label{alg:case:noc}
\STATE for every~$1 \leq i \leq d$\; %
\STATE \hspace{0.3cm} \textbf{if}~$c = \mathtt{count}$ \textbf{then} break \textbf{else}~$c \gets \mathtt{count}$\;\; \label{alg:case:term}
\STATE\hspace{0.3cm} for every~$1 \leq j \leq i$ 
\STATE \hspace{0.6cm}$c' \gets \val(\tau(\mathcal{G}_\Pi)^{L \cup L'})$ where~$L'$ is the set of literals appearing in~$\Gamma_j \in \Lambda_i(\Pi)$\;
\STATE 
\hspace{0.6cm}\textbf{if}~$i$ is odd \textbf{then} $\mathtt{count} \gets \mathtt{count} - c'$  \textbf{else}~$\mathtt{count} \gets \mathtt{count} + c'$\;
\STATE 
\textbf{return} $\mathtt{count}$\;
\end{algorithmic}
\caption{Incremental Counting by Anytime Refinement}
\label{alg:hasc}
\end{algorithm}
\noindent Using our approach on
computing~$a^L_{n}$, we end up with
$2^{n}$ (supported model) counting operations where $n \defi |\mathit{cycles}(\Pi)|$ on the
respective compressed counting graph $\tau(\mathcal{G}_\Pi)$, which, since counting is
linear in $k \defi |\tau(\cg)|$, gives us that incremental answer set counting
under assumptions is by $2^n \cdot k$ exponential in
time. However, we can restrict the alternation depth to $d$ such that $0 \leq d
< n$ in order to stop after $\Lambda_{d}(\lp)$. Then we need to count~$n$ times 
for each cycle and its respective unsupported constraints and another~$\binom{n}{i}$ 
times for $1 < i \leq d$, that is, for each number of subsets of cycles and their respective
unsupported constraints with cardinality~$i$.
These considerations yield the
following result.

\begin{theorem}\label{the:count}%
  Let $\Pi$ be a program, $L$ be assumptions,
  and~$0\leq d\leq n$ with $n\defi |\mathit{cycles}(\Pi)|$. We
  can compute~$a^L_{d}$ in time $\mathcal{O}(m \cdot |\tau(\cg)|)$
  where $m = \sum_{i \leq d} \binom{n}{i}$.
\end{theorem}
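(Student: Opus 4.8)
The statement is a runtime bound for Algorithm~\ref{alg:hasc}, so the plan is simply to account for the number of counting operations performed and multiply by the cost of each one. First I would recall from Corollary~\ref{corr:spcount} that a single evaluation of~$\val$ on a (conditioned) counting graph is linear in the size of that graph, and from Lemma~\ref{thm:stdccg} that the compressed counting graph $\tau(\mathcal{G}_\Pi)$ can be built once (in the offline phase, in time $\mathcal{O}(2\cdot|\sd|)$) and then reused; hence each invocation $\val(\tau(\mathcal{G}_\Pi)^{L\cup L'})$ costs $\mathcal{O}(|\tau(\cg)|)$, since conditioning only zeroes out some leaf labels and does not change the graph's size. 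So the whole cost is (number of $\val$-calls)~$\times~\mathcal{O}(|\tau(\cg)|)$, and it remains to bound the number of calls.

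Next I would count the $\val$-calls by inspecting the loop structure of Algorithm~\ref{alg:hasc}. There is one initial call for $\Card{\SP[\assume{\lp}{\ass}]}$ (the $i=0$ term), and then for each $i$ with $1\leq i\leq d$ the inner loop issues one call $\val(\tau(\mathcal{G}_\Pi)^{L\cup L'})$ for each $\Gamma\in\Lambda_i(\Pi)$; since $\Lambda_i(\Pi)$ ranges over all $i$-element subsets of $\mathit{cycles}(\lp)$, there are exactly $\binom{n}{i}$ of them. Summing, the total number of $\val$-calls is $\sum_{i=0}^{d}\binom{n}{i}=\sum_{i\leq d}\binom{n}{i}=m$ (absorbing the $i=0$ term $\binom{n}{0}=1$ into the sum). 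The rounding $d\gets d+1$ when $d$ is odd (Line~\ref{alg:case:noc}) and the early break (Line~\ref{alg:case:term}) can only keep the count at or below this bound, so $m$ is an upper bound regardless. Multiplying, the running time is $\mathcal{O}(m\cdot|\tau(\cg)|)$ as claimed. (Correctness that this returns $a_d^{\ass}$ follows from the definition of $a_d^{\ass}$ via inclusion--exclusion together with Lemma~\ref{obs:cycles} and Theorem~\ref{prop:incl}, but the theorem only asserts the time bound.)

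The only mild subtlety — the main "obstacle," such as it is — is justifying that conditioning preserves the $\mathcal{O}(|\tau(\cg)|)$ per-call cost even though $L\cup L'$ changes from iteration to iteration: one must note that $\tau(\mathcal{G}_\Pi)$ itself is fixed and only the $\val$-labels on literal nodes are reset to $0$ for literals contradicted by $L\cup L'$, so each re-count is a single fresh post-order traversal of the same fixed DAG, and computing which leaves to zero out costs at most $\mathcal{O}(|L\cup L'|)\subseteq\mathcal{O}(|\tau(\cg)|)$. Everything else — the arithmetic $\sum_{i\le d}\binom{n}{i}=m$ and the observation that the break can only help — is routine, so I would keep those steps brief.
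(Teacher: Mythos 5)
Your proposal is correct and matches the paper's own argument: the paper likewise justifies the bound by counting one $\val$-evaluation per element of $\Lambda_i(\Pi)$ for each $i\leq d$ (i.e., $\binom{n}{i}$ evaluations per level, $m$ in total) and multiplying by the linear cost of a single traversal of $\tau(\cg)$. Your extra remark that conditioning only re-labels leaves and does not enlarge the fixed DAG is a slightly more careful justification of the per-call cost than the paper bothers to give, but it is the same approach.
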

Note that if we choose an even $d$, we will stop on adding back, potentially overcounting,
and otherwise we will stop on subtracting, potentially undercounting.
Algorithm~\ref{alg:hasc} ensures that we end on an add-operation to avoid undercounting 
in Line~\ref{alg:case:noc}.
Furthermore, it uses Lemma~\ref{thm:term} as a termination criterion in Line~\ref{alg:case:term}.
\begin{example}
  Consider program~$\lp_3$ from Example~\ref{ex:cycles}, which has
  $6$ supported models, namely, $\{\{d\}$, $\{d,e,f\}$, $\{a,b,d\}$,
  $\{a,b,c\}$, $\{a,b,c,e,f\}$, $\{a,b,d,e,f\}\}$ of which $\{d\}$ and
  $\{a,b,c\}$ are answer sets.  Suppose we want to determine
  $a^{\{ d \}}_1$, then:
	\begin{align*}
		a^{\{ d \}}_1 &= |\SP[\assume{\lp}{\{ d \}}]| -
							   |\SP[\assume{\lp}{\{ d \} \cup B(\lambda(C_0))}]|
							   - |\SP[\assume{\lp}{\{ d \} \cup B(\lambda(C_1))}]| \\
					  &= |\SP[\assume{\lp}{\{ d \}}]| -
							   |\SP[\assume{\lp}{\{ d,a,b,\pneg c,\pneg g \}}]|
							   - |\SP[\assume{\lp}{\{ d,e,f,\pneg g \}}]| \\
		&= 4 - 2 - 2 = 0.
		\end{align*}
  We see that restricting the alternation depth
  to~$1$, leads to undercounting.  However, not restricting the  depth leads to the exact count as: 
	\begin{align*}
		a^{\{ d \}}_2 &= a^{\{ d \}}_1 + 
							   |\SP[\assume{\lp}{\{ d \} \cup
							   B(\{\lambda(C_0),\lambda(C_1)\})}]| = a^{\{ d \}}_1 + |\SP[\assume{\lp}{\{ d,a,b,e,f,\pneg c,\pneg g \}}]| \\
		&= 0 + 1 = 1 = |\AS[\assume{\lp_3}{\{ d \}}]|.
		\end{align*}
\end{example}

\paragraph{Preprocessing Cycles.}
When computing the incremental count~$a^{\ass}_i$, we can implement a
simple preprocessing step.
Recall that an unsatisfiable propositional formula remains
unsatisfiable when adding additional
clauses~\cite{Kleine-BuningLettmann99}.
Hence, if the conjunction of an unsupported constraint and assumption
leads to an unsatisfiable formula, we can immediately obtain the
resulting supported model count.

\begin{example}\label{ex:preprocessing}
  Consider program~$\lp_4$ given as follows:
  \begin{align*}
    \lp_4 = \{&a \leftarrow b,& &b \leftarrow a,& &b \leftarrow c,& &c \leftarrow b,\\
              &a \leftarrow d,& &d \leftarrow a,& &c \leftarrow d,& &d \leftarrow c,\\ 
              &a \leftarrow g,& &b \leftarrow \pneg h,& &c \leftarrow f,& &d \leftarrow
                                                                            \pneg e,\\
              &e \leftarrow \pneg g,& &g \leftarrow \pneg e,& &f \leftarrow \pneg h,& &h
                                                                                        \leftarrow \pneg f\}.
  \end{align*}
  The supported models of~$\lp_4$ are $\mathcal{S}(\lp_4)=\{$
  $\{e,h\}$, $\{a,b,c,d,g,h\}$, $\{a,b,c,d,f,g\}$, $\{a,b,c,d,e,h\}$,
  $\{a,b,c,d,e,f\}\}$. The answer sets of~$\lp_4$ are
  $\AS[\lp_4] = \mathcal{S}(\lp_4) \setminus \{\{a,b,c,d,e,h\}\}$.
  The program~$\lp_4$ admits eight cycles, which are illustrated in
  Figure~\ref{fig:scyc} by the positive dependency graph of~$\lp_4$.
  Hence, the unsupported constraints of~$\lp_4$ are:
  \begin{align*}
    &\lambda(C_0) =\bot \leftarrow a,b,\pneg c, \pneg d, \pneg g,& &\lambda(C_1)
                                                                  =\bot \leftarrow b, c,\pneg a, \pneg d, \pneg f,\\
    &\lambda(C_2) =\bot \leftarrow c,d,\pneg a, \pneg b, \pneg f,& &\lambda(C_3) =
                                                                  \bot \leftarrow a,b,c,\pneg d,\pneg f, \pneg g,\\
    &\lambda(C_4) =\bot \leftarrow a,b,d,\pneg c, \pneg g,& &\lambda(C_5) =
                                                           \bot \leftarrow a,c,d,\pneg b, \pneg f,\pneg g,\\
    &\lambda(C_6) = \bot \leftarrow b,c,d,\pneg a,\pneg f,& &\lambda(C_7) = \bot
                                                           \leftarrow a,b,c,d,\pneg f,\pneg g.
  \end{align*}
  According to Corollary~\ref{corr:ae}, we have that
  $|\AS[\lp_4^{L}]| = a^{L}_{8}$.
  Regarding the preprocessing for cycles. %
  Assume that we
  have~$L=\{\pneg a, b\}$. Then, we can
  restrict~$\Lambda_d(\Pi) = \{\lambda(C_0), \dots, \lambda(C_7)\}$
  to~$U = \{\lambda(C_1), \lambda(C_6)\}$. In consequence, 
  \begin{align*}
    |\SP[\assume{\lp}{L}\cup U]|&= |\SP[\assume{\lp}{L}]| -|\SP[\assume{\lp}{L \cup B(\lambda_1)}]|
                      - |\SP[\assume{\lp}{L \cup B(\lambda_6)}]|
                      + |\SP[\assume{\lp}{L \cup B(\{\lambda_1, \lambda_6\})}]| \\
		&= 0 - 0 - 0 + 0  = 0 = |\AS[\assume{\lp_4}{L}]|.
  \end{align*}
\end{example}
\begin{figure}[t]
\centering
\begin{tikzpicture}[
grow=right,
level distance=200mm,
sibling distance=105mm]
    \node[draw=none] (A) at (1,-1){$a$};
    \node[draw=none] (B) at (2,-1){$b$};
    \node[draw=none] (C) at (2,-2) {$c$};
    \node[draw=none] (D) at (1,-2) {$d$};
    \node[draw=none] (E) at (0,-2) {$e$};
    \node[draw=none] (F) at (3,-2) {$f$};
    \node[draw=none] (G) at (0,-1) {$g$};
    \node[draw=none] (H) at (3,-1) {$h$};

    \path [->, bend right](A) edge node[right] {} (B);
    \path [->, bend right](B) edge node[right] {} (A);
    \path [->, bend right](B) edge node[right] {} (C);
    \path [->, bend right](C) edge node[right] {} (B);
    \path [->, bend right](A) edge node[right] {} (D);
    \path [->, bend right](D) edge node[right] {} (A);
    \path [->, bend right](C) edge node[right] {} (D);
    \path [->, bend right](D) edge node[right] {} (C);
    \path [->](F) edge node[right] {} (C);
    \path [->](G) edge node[right] {} (A);

\end{tikzpicture}
\caption{The positive dependency graph of program~$\lp_4$ from
  Example~\ref{ex:preprocessing}.}
\label{fig:scyc}
\end{figure}
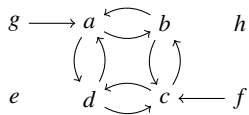

\newcommand{\iascar}[0]{\texttt{iascar}\xspace}

\section{Empirical Evaluation}
To demonstrate the capability of our approach, we 
implement the functionality into a tool that we call
\iascar (incremental answer set counter with anytime refinement and
counting graph compressor).
Our prototypical system is publicly available.\footnote{%
  The latest version can be found on github at~\url{https://github.com/drwadu/iascar}~.
}
Below, we outline implementation details and illustrate the results of
a series of practical experiments, which aim at evaluating the
feasibility of our approach and its limitations.
We explain the design of experiments, our expectations, and examine our
expectations within a set of instances originating in an AI problem, a
prototypical ASP problem, standard combinatorial puzzles, and graph
problems.\footnote{%
  Experimental data, including a Linux binary and the source code of
  the evaluated version of \texttt{iascar}, is available at
  \url{https://doi.org/10.5281/zenodo.10091992}~\cite{fichte_2023_10091993}.}

\paragraph{Design of Experiments.}
We design an empirical evaluation to study the questions:
\begin{enumerate}
\item%
  Can we obtain sd-DNNFs for supported model counting by
  modern knowledge compilers?
\item%
  Are these resulting sd-DNNFs feasible for our incremental
  answer set counting?
\item%
  How does incremental counting on sd-DNNFs compare to
  translating ASP instances into CNFs and run state-of-the-art model
  counters?
\item%
  Since our technique aims at improving counting multiple
  times and under varying assumptions, do we benefit from the
  potentially expensive construction of sd-DNNFs when counting
  multiple times?
\item%
  What are the qualitative effects of the inclusion-exclusion-based
  approach to reduce the over-counting that initially occurs when only
  supported models are constructed but reduced gradually?
\end{enumerate}

\paragraph{Implementation Details.}
Our system \iascar is written in Rust and builds upon well-established
tools, namely, \texttt{gringo} for constructing ground
instances~\cite{gebser2011advances}, the Aalto ASP Tools for
converting extended rules~\cite{BomansonGebserJanhunen16} and
constructing Clark's completion~\cite{gebser2011advances}, and
\texttt{c2d} to compile CNFs into a
DNNF~\cite{Darwiche04a,darwiche1999compiling}.
In more detail, we implement Algorithms~\ref{alg:ccgstd}
and~\ref{alg:hasc}, which first construct a CCG and then count %
based on the inclusion-exclusion technique.
We assume the input program to be ground, if not we use
\texttt{gringo} to construct a propositional
instance~\cite{gebser2011advances}.
To obtain a CCG from a propositional program, we
first convert extended rules of the ground input program into normal
rules using the tool
\texttt{lp2normal}~\cite{BomansonGebserJanhunen16}.
Then, we construct a positive dependency graph from the propositional
program and encode simple cycles,~i.e., only the first and last vertex
repeat, as unsupported constraints.
According to Corollary~\ref{corr:ae}, we need to take all cycles into
account to obtain the exact number of answer sets of an instance.
Separately, we store the completion of the resulting program as a CNF
using \texttt{lp2sat}~\cite{Janhunen06}.
Afterward, we compile the resulting CNF into an (sd-D)NNF by employing
\texttt{c2d}~\cite{Darwiche04a,darwiche1999compiling}.

\paragraph{Model Counters for Comparison.}%
Later, we compare our system to existing tools for counting. 
Natural approaches for counting are:
(a)~We employ answer set counters.
(b)~We enumerate answer sets by a recent answer set solver. 
(c) Alternatively, we translate the propositional input program into a
propositional formula and run state-of-the-art preprocessors and model
counters on the resulting formula. We require a one-to-one
correspondence between the answer sets and the satisfying assignments for the
translation.
Unfortunately, existing answer set counters focus on extended
functionality like probabilistic
reasoning~\cite{FichteHecherNadeem22}, algebraic
semi-rings~\cite{EiterHK21}, or are tailored towards approximate
counting~\cite{KabirEverardoShukla22} or certain structural
restrictions of the instance~\cite{FichteEtAl17}.
Therefore, we omit tools listed in (a) from an evaluation.
For (b), we use the answer set
solver~\texttt{clingo}~\cite{GebserKaufmannSchaub09} to enumerate
answer sets. To speed up solving, we do not output the answer sets.
\myhighlight{%
  Since there have been recent advances on enumerating answer
  sets~\cite{AlvianoDodaroFiorentino23a}, we also include the
  solver~\texttt{wasp}, where
  we state only the number of answer sets and report only one configuration,
  since we observe no notable difference.
}
For repeated counting with \texttt{clingo}, one could store the
enumerated answer sets and implement fast data structures to test
whether an element belongs to a set~\cite{Bloom70,WeaverRayMarek12} or
count~\cite{MeelShrotriVardi18}. To our knowledge, there is no
implementation that follows this direction and we did not implement it
ourselves.
For (c), we turn the input program into a propositional program
using~\texttt{gringo}, convert extended
rules~\cite{BomansonGebserJanhunen16} into normal rules
(\texttt{lp2normal}), construct Clark's
completion~\cite{gebser2011advances} (\texttt{lp2sat}), and add 
level mappings (\texttt{lp2atomic}).
Then, we apply bipartition and elimination as a preprocessing step
using \texttt{b+e}~\cite{LagniezMarquis17} and evaluate leading
solvers of the model counting
competition~\cite{FichteHecher21,FichteHecherHamiti21} using different
conceptual techniques.
Therefore, we take \texttt{c2d}~\cite{Darwiche04a},
\texttt{d4}~\cite{LagniezMarquis17a}, and
\texttt{sharpsat-td}~\cite{KorhonenJarvisalo21}.
Each solver counts satisfying assignments on propositional formulas
given as CNF.
We consider approximate counting~\cite{ChakrabortyEtAl14a}, which is
interesting for projected counting or settings where we cannot expect
a solution from exact model counters. Since we observe no notable
performance gain in this setting, we omit it below.

\paragraph{Platform, Measure, and Restrictions.}
We evaluated our system on two platforms (a) laptop for a user-tailored
evaluation on instances with more detailed interest and (b) a
systematic evaluation on a larger set of benchmark instances.
For (a), we ran the experiments on an $8$-core intel I7-10510U CPU 1.8
GHz with 16 GB of RAM, runnning Manjaro Linux 21.1.1 (Kernel
5.10.59-1-MANJARO).
For (b), we used a high-performance cluster consisting of 12
nodes. Each node of the cluster is equipped with two Intel Xeon
E5-2680v3 CPUs, where each of these 12 physical cores runs at 2.5
GHz clock speed and has access to 64 GB shared RAM. Results are
gathered on Linux RHEL 7 powered on kernel 3.10.0-1127.19.1.el7 with
hyperthreading disabled.
Transparent huge pages are set to system
default~\cite{FichteMantheySchidler20a}.
We follow standard guidelines for empirical
evaluations~\cite{KouweAndriesseBos18a,FichteHecherMcCreesh21} and
measure runtime using~\texttt{perf} and enforce limits using
\texttt{runsolver}~\cite{Roussel11a}.
We mainly compare wall clock time.  Run times larger than 900 seconds
count as timeout and main memory (RAM) was restricted to 8 GB.
We chose a small timeout due to the interest in fast counting and fast
counting multiple times as outlined in the design of experiments.
We ran jobs exclusively on one machine, where solvers were executed
sequentially with exclusive access and at most four other runs were
executed on the same node.

\paragraph{Instances.}
For our experiment, we select instances that result in varying NNF
sizes, CCG sizes, and the number of simple cycles, answer sets, and
supported models. We expect prototypical problems for counting
multiple times to be found in probabilistic settings. However, this
area is entirely unexplored for ASP. Gradually investigating the
search space of an ASP instance, so-called navigation is an
application for counting multiple times on the same instance under
assumptions.
Nevertheless, there are no standard ASP benchmark sets and ASP
competitions~\cite{GebserMarateaRicca17,DodaroRedlSchuller19} %
are either tailored for modeling problems or solving decision or
optimization problems.
Therefore, we consider different types of instances.  Set~(S1)
contains 242 instances that solve a problem in artificial
intelligence.
Set~(S2) consists of \myhighlight{936} instances of a prototypical ASP problem.
Set~(S3) includes a very small set of instances of combinatorial
problems.
The instances in sets~(S1) and~(S2) have been used in previous works
on ASP and
counting~\cite{EiterHK21,BesinHecherWoltran21,Hecher22}. 
Set~(S1)
encodes finding extensions of an argumentation
framework~\cite{FichteGagglRusovac21,DvorakGRWW20,GagglLMW20}. While
there have been various iterations of the argumentation competition
ICCMA, we focused on instances from 2017~\cite{GagglLMW20}, and
encode conflict-free sets of abstract argumentation
instances. These instances have a relatively high number of answer
sets and are cycle-free. In contrast, the 2019 instances are easy to
enumerate~\cite{iccma19}.  The 2021 instances have only a relatively
small number of solutions~\cite{MaillyEtAl21}. The ASP encoding for
conflict-free sets originates in the abstract argumentation
system ASPARTIX~\cite{DvorakGRWW20}. More insights on
counting and abstract argumentation frameworks and their varying
semantics are available in the
literature~\cite{DewoprabowoFichteGorczyca22}.
Set~(S2) consists of instances that encode a prototypical ASP domain
with reachability and use of transitive closure containing
cycles. 
While the previous set can be done by encoding ASP instances into SAT
without the use of level mappings, this set provides us with a domain
to distinguish the effect of cycles.
Reachability on these instances is considered on quite large real-world graphs
of public transport networks from all over the world,~\cite{Dell17a}. \myhighlight{We
select graphs that either incorporate no particular means of public transport
or all of them. Further, we omit unsatisfiable
instances thereof.}
Set~(S3) contains %
the well-known~$n$-queens problem for $n \in \{8,10,12\}$; %
a sudoku sub-grid (3x3\_grid) that has to be filled uniquely with
numbers from~$1$ to~$9$; %
the~$3$-coloring problem on a graph (3\_coloring) and an encoding that
ensures arbitrary~$2$-coloring for the same graph (arb\_2\_coloring).
These instances admit no simple cycles.

\paragraph{Setup.} %
Since instances from the sets~(S1) and~(S2) contain many instances, we evaluate
these on a cluster and summarize the details in Table~\ref{tab:cluster_}. In
addition, we report on interesting instances in more detail in
Table~\ref{tab:detail}. There, we omit~(S1) due to absence of cycles. For
counting under assumptions, we select from the given instance uniform at
random three atoms and set them randomly to true or false. By setting few
assumptions, we ensure that only few solutions are cut.
For considered solvers, we count answer sets and supported models and
repeat two times counting under up to three random assumptions. For
\iascar we run varying alternation depth until we reach a
fixed-point as by Lemma~\ref{thm:term}.

\begin{table}[t]
\begin{tabular}{clrrrrr}
  \toprule
  \bf Set & \bf Solver          & \bf \# & \bf sd-DNNF[s] & \bf ccg[s] & \bf a[s] & \bf \#AS                     \\
  \midrule
  S1      & \texttt{sharpsat-td} & 183    & --             & --         & 33.6     & 104.4               \\
          & \texttt{c2d}         & 182    & --             & --         & 41.5     & 104.9               \\
          & \texttt{iascar}      & 180    & 24.1           & 32.0       & \bf 0.1      & 106.0     \\
          & \texttt{d4}          & 174    & --             & --         & 8.3      & 30.8                \\
          & \texttt{clingo}      & 96     & --             & --         & 4.4      & 4.3                 \\
          & \texttt{wasp}        & 78     & --             & --         & 12.7      & 3.7                \\
  \midrule
  S2      & \texttt{clingo}      & 397   & --             & --         & 21.2     & 2.2               \\
          & \texttt{d4}          & 352   & --             & --         & 70.1     & 1.6                \\
          & \texttt{iascar*}     & 343   & 5.7            & 33.4        & 524.2   & 12.7    \\
          & \texttt{iascar-d2*}  & 343   & 5.7            & 32.1        & 266.6  & 13.0  \\
          & \texttt{wasp}        & 341   & --             & --         & 9.3     & 1.5                    \\
          & \texttt{sharpsat-td} & 330   & --             & --        & 66.5     & 1.6                   \\
          & \texttt{c2d}         & 318   & --             & --         & 105.2     & 1.5                \\
          & \texttt{iascar-d2}   & 241    & 3.1            & 2.3        & 46.5       & 6.5     \\
          & \texttt{iascar}      & 131    & 0.9            & 2.8        & 14.8      & 0.2     \\
  \midrule
  S3      & \texttt{iascar}      & 6      & 30.0           & 29.8       & \bf 0.2  & 10.8     \\
          & \texttt{d4}          & 6      & --             & --         & 8.8      & 10.8     \\
          & \texttt{sharpsat-td} & 6      & --             & --         & 45.8     & 10.8      \\
          & \texttt{c2d}         & 6      & --             & --         & 15.8      & 10.8     \\
          & \texttt{clingo}      & 4      & --             & --         & 2.9      & 3.6     \\
          & \texttt{wasp}        & 3      & --             & --         & 12.5     & 3.0     \\
  \bottomrule
\end{tabular}
\caption{
\myhighlight{
  Comparing runtimes of different solvers when directly
  counting answer sets by enumeration (\texttt{clingo}, \texttt{wasp}), counting answer sets on a
  translation to SAT (\texttt{c2d}, \texttt{sharpsat-td}, \texttt{d4}),
  using incremental answer-set counting (\texttt{iascar}), or using
  incremental answer-set counting (\texttt{iascar-d2}) of depth two.
  \texttt{iascar*} and \texttt{iascar-d2*} refer to runs where, regardless of the timeout, 
  a bound (anytime count) was obtained.
  We omit \texttt{iascar-d2} due to relevance for~(S1) and~(S3).
  (S1) consists of 242 instances,~(S2) of 936 instances, and~(S3) of 6 instances.
  \# refers to the number of solved instances within the timeout of
  900s.
  The average time of the compilation phase for solved instances comprises both sd-DNNF[s] (average time for translating into CNF and sd-DNNF compilation) and ccg[s] (average time for counting %
  graph compression and encoding unsupported constraints).
  a[s] refers to the average runtime of the counting step.
  \#AS contains the count in~$\log_{10}$ notation, which equals the number
  of answer sets for all solvers except \texttt{iascar-d2}, \texttt{iascar*} and \texttt{iascar-d2*}.}
}%
\label{tab:cluster_}
\end{table}

\paragraph{Expectations.}

Before we state the results, we formulate expectations from the design
of experiment and our theoretical understanding. %

\begin{itemize}
\item[(E1.1):] When counting multiple times, \iascar~outperforms
  existing systems.
\item[(E1.2):] When counting once, \iascar is notably slower due to
  the overhead caused by compilation and compression.
\item[(E1.3):] %
  Compiling sd-DNNFs from formulas that encode answer sets takes much
  longer than when compiling supported models.
  Most of the time is spend on the compilation for \iascar if the
  number of cycles is small.

\item[(E2.1):] Compressing the counting graph can significantly reduce
  its size and works fast.
\item[(E2.2):] The runtime of \iascar depends on the number of
  cycles and size of the CCG due to the structural parameter of the
  underlying algorithm.
\item[(E2.3):] If the instance has few cycles, counting works
  fast. Otherwise, depth restriction makes our approach utilizable.
\item[(E3):] There are instances on which simple cycles are not
  sufficient for counting answer sets.
\end{itemize}

\begin{table}[t]
 \centering
\begin{adjustbox}{max width=\textwidth}
 \begin{tabular}{@{\hskip-2pt}l@{\hskip-3pt}l@{\hskip-3pt}r@{\hskip0pt}r@{\hskip-1pt}r@{\hskip0pt}r@{\hskip0pt}c@{\hskip0pt}c@{\hskip0pt}r@{\hskip0pt}r@{\hskip0pt}r@{\hskip0pt}r}
   \toprule
   \bf Set                                                                                                                                    & \bf Instance        & \bf cnf[s]       & \bf sup[s]     & \bf A[s]       & \bf T[s]            & \bf \#$\mathcal{S}$ & \bf \#AS            & \bf \#SC       & \bf d             & \bf sd-DNNF size          & \bf CCG size       \\
   \midrule                                                                                                                            %
   S2                                                                                                                                        & nrp\_autorit        & $6.6$            & $0.4$          & $0.0$          & $0.0$               & $1.6 \cdot 10^{01}$ & $4.0 \cdot 10^{01}$ & $5$            & $5$               & $166$                 & $123$              \\
   S2                                                                                                                                         & \bf nrp\_hanoi      & $280.2$          & $4.1$          & $\mathbf{0.3}$ & $0.0$               & $1.0 \cdot 10^{14}$ & $3.2 \cdot 10^{12}$ & $\mathbf{77}$  & \bf *$\mathbf{2}$ & $4,119$               & $\mathbf{3,128}$   \\
   S2                                                                                                                                         & \bf nrp\_berkshire  & $\mathbf{311.3}$ & $\mathbf{2.7}$ & $\mathbf{5.0}$ & $0.0$               & $1.2 \cdot 10^{13}$ & $0.0 \cdot 10^{00}$ & $\mathbf{206}$ & \bf *$\mathbf{2}$ & $\mathbf{10,626}$     & $\mathbf{7,914}$   \\
   S2                                                                                                                                         & nrp\_bart           & $105.1$          & $2.1$          & $0.1$          & $0.0$               & $2.3 \cdot 10^{07}$ & $5.8 \cdot 10^{06}$ & $46$           & *$2$              & $1,645$               & $1,223$            \\
   S2                                                                                                                                         & nrp\_aircoach       & $253.8$          & $3.2$          & $1.6$          & $0.0$               & $8.6 \cdot 10^{11}$ & $0.0 \cdot 10^{00}$ & $130$          & *$2$              & $8,874$               & $6,667$            \\
   S2                                                                                                                                        & nrp\_kyoto          & $0.0$            & $0.0$          & $0.0$          & $0.0$               & $2.0 \cdot 10^{00}$ & $0.0 \cdot 10^{00}$ & $2$            & $2$               & $57$                  & $38$               \\                                                                                   
   S3                                                                                                                                         & 8\_queens           & $5.2$            & $4.5$          & $0.0$          & $0.0$               & $9.2 \cdot 10^{01}$ & $0.0 \cdot 10^{00}$ & $0$            & $0$               & $48,791$              & $3,490$            \\
   S3                                                                                                                                         & 10\_queens          & $9.7$            & $6.9$          & $0.0$          & $0.0$               & $7.2 \cdot 10^{02}$ & $1.2 \cdot 10^{01}$ & $0$            & $0$               & $532,645$             & $31,172$           \\
   S3                                                                                                                                         & \bf 12\_queens      & $95.6$           & $46.0$         & $0.1$          & $\mathbf{0.7}$      & $1.4 \cdot 10^{04}$ & $7.5 \cdot 10^{01}$ & $0$            & $0$               & $\mathbf{12,529,332}$ & $\mathbf{649,354}$ \\
   S3                                                                                                                                         & 3x3\_grid           & $5.7$            & $4.5$          & $0.0$          & $0.1$               & $3.6 \cdot 10^{05}$ & $7.2 \cdot 10^{02}$ & $0$            & $0$               & $788,711$             & $210,893$          \\
   S3                                                                                                                                         & 3\_coloring         & $8.5$            & $7.2$          & $0.0$          & $0.0$               & $1.0 \cdot 10^{17}$ & $3.0 \cdot 10^{16}$ & $0$            & $0$               & $6,677$               & $2,839$            \\
   S3                                                                                                                                         & arb\_2\_coloring    & $0.4$            & $0.4$          & $0.0$          & $0.0$               & $5.2 \cdot 10^{33}$ & $6.5 \cdot 10^{32}$ & $0$            & $0$               & $1,061$               & $446$              \\
   \bottomrule
 \end{tabular}
\end{adjustbox}
\caption{%
   \myhighlight{For selected interesting instances from the considered sets}, we compare runtimes of \iascar  for compiling the input program to an NNF when
 directly counting answer sets (cnf), %
 counting supported models (sup),  converging to 
 the answer set count (A) under assumptions with specified alternation
 depth (d) of several instances with varying numbers of simple cycles (\#SC),
 compressing counting graphs (T),
  and supported models (\#$\mathcal{S}$), sd-DNNF sizes (sd-DNNF size) and
 CCG sizes (CCG size). Depths marked with * indicate
 restricting alternation depths to the corresponding
 value.}\vspace{-.75em}
 \label{tab:detail}
\end{table}

\paragraph{Observations and Results.}

We summarize our results in Table~\ref{tab:cluster_} and
Table~\ref{tab:detail}.
We exclude~(S1) from Table~\ref{tab:detail} due to
absence of cycles.
Experimental data and instances
are publicly available~\cite{fichte_2023_10091993}.

\begin{itemize}
  \item[(O1):] In Table~\ref{tab:cluster_} and Table~\ref{tab:detail}, we see that \iascar can
  compute the answer sets
    fast if the number of cycles is small or
  only few cycles are present. When taking a look onto
  Table~\ref{tab:detail}, we see that instances such as 3\_coloring
  or arb\_2\_coloring can be solved fast despite the
  high number of solutions. This confirms our Expectation~(E1.1).

\item[(O2):] We observe in Table~\ref{tab:cluster_} that while the ASP
  solver \texttt{clingo} suffers as soon as the number of instances is
  high, dedicated model counters can compute the number of answer sets
  quite fast on the considered instances. In fact, the overall time is
  faster than the overall time for \texttt{iascar}, which confirms our
  Expectation~(E1.2). When inspecting the number of cycles as well, it
  confirms our Expectation~(E2.3).

\item[(O3):] In Table~\ref{tab:cluster_}, we can see that \iascar
  spends a notable time during the phase of constructing sd-\myhighlight{D}NNFs of a
  CNF if the instance has few or no cycles.  Interestingly, in our experiments we have seen that %
  constructing an sd-DNNF of a CNF can vary notably ranging
    from \myhighlight{0.1s} to \myhighlight{472.0s} for~(S1) and ranges within a few seconds for~(S2).
  When we encode answer sets instead of supported models into a CNF,
  we obtain significantly higher runtimes for compiling the CNF into
  sd-DNNF.
  In contrast, \iascar might allow fast compilation, but can result in
  extremly high runtimes when applying the inclusion-exclusion
  principle.
  This only partially confirms our Expectation~(E1.3).
  Table~\ref{tab:detail} provides a more detailed observation for
  selected instances. We see that on smaller instances such as
  8\_queens, 3x3\_grid, or arb\_2\_coloring, we can compile and
  count answer sets in reasonable time. Whereas on instances such as
  nrp\_hanoi or nrp\_berkshire we observe a high runtime; in
  particular, there we see that sd-DNNFs can become quite large.

\item[(O4):] In Table~\ref{tab:detail} column T[s], we can see that
  there are instances where compressing the counting graph can
  significantly reduce its size. On many instances, we see a reduction
  by one order, for example, 10\_queens by factor~17.1 and 12\_queens
  by 19.3. Still, for 3x3\_grid, we see a reduction by 3.7.  This
  confirms Expectation~(E2.1), but there we cannot necessarily expect an
  improvement, which is not unsurprising due to the nature of this
  simplification step.
  In fact, compressing instances with a large number of cycles, such
  as nrp\_berkshire, is less effective than on those with a small
  number of cycles, such as nrp\_kyoto and 12\_queens.

\item[(O5):] By correlating Observation (O3) with column \#SC in
  Table~\ref{tab:detail}, we can see that instances, which can be
  solved fast, have no simple cycles.
  This pattern still holds, if we take a look on
  Table~\ref{tab:cluster_} for more instances.
  When considering only a few cycles as in \texttt{iascar-d2}, which
  considers only depth two, we can see that instances for (S2) result
  in significantly more solved instances, but a high over-count.
  This matches with our expectation (E2.2) and the knowledge on how
  CNFs are generated from a program as cycles are a primary source of
  hardness in ASP.
  Unsurprisingly, compiling CNFs without level mappings/loop
  formulas, as stated in column sup[s], works much faster.
  This is particularly visible for instances nrp\_hanoi,
  nrp\_berkshire, nrp\_bart, or nrp\_aircoach. %

\item[(O6):] From columns \#SC, depth, and A[s] in
  Table~\ref{tab:detail}, we can see that the runtime on the
  illustrated instances depends on both parameters. A medium number of
  simple cycles and depth effects the runtime; similar to high number
  of simple cycles and small depth. Still, with a high number of
  simple cycles and a small depth, we can obtain the count under
  assumption sufficiently fast.
  This partially confirms our Expectation~(E2.2). Interestingly, the
  size of the CCG itself has a much less impact than anticipated, see
  instance 12\_queens.
\item[(O7):] Consider Table~\ref{tab:detail}. The runtime, as stated
  in column~A[s], indicates that we can still obtain a reasonable
  count for instances, which ran with restricted depth, marked by *;
  see for example nrp\_hanoi, nrp\_aircoach, or nrp\_berkshire.
\item[(O8):] Finally, note that in Table~\ref{tab:detail} there is one instance, namely, nrp\_autorit, for
  which we over-counted by~$3$ when restricting to simple cycles,
  which confirms Expectation~(E3). However, on all other instances, we
  obtained the exact count.
\end{itemize}

\paragraph{Summary.}
The evaluation indicates that our approach clearly
pays off on instances containing reasonably many %
cycles. In particular, we see promising results when counting under
assumptions, clearly benefiting from knowledge compilation. %
Compression of the counting graph works reasonably fast and can significantly reduce its size.
Overall, the drawn experiments allowed us to confirm our expectations we stated before running the experiments.
However, we see that our approach shows only benefits if the
number of cycles is sufficiently small and whenever we are interested in counting
multiple times.
We expect that additional preprocessing pays off, if we can either exclude cases
where there are no \myhighlight{answer sets} possible or where we can reduce the instance size
notably, as with preprocessing of propositional formulas.
Further, since knowledge compilation might consume larger parts of our overall runtime, we immediately expect better performance with the availability of improved and optimized knowledge compilers.

\newcommand{\tccg}{\texttt{t-ccg}}
\newcommand{\tccgs}{\texttt{t-ccg-sat}}
\newcommand{\tcnf}{\texttt{t-cnf}}
\newcommand{\tcnfs}{\texttt{t-cnf-sat}}
\newcommand{\pccg}{\texttt{p-ccg}}
\newcommand{\pccgs}{\texttt{p-ccg-sat}}
\newcommand{\pcnf}{\texttt{p-cnf}}
\newcommand{\pcnfs}{\texttt{p-cnf-sat}}
\newcommand{\apm}{\texttt{approxmc}}
\newcommand{\ctw}{\texttt{c2d}}
\newcommand{\dfo}{\texttt{d4}}
\newcommand{\gan}{\texttt{ganak}}
\newcommand{\sst}{\texttt{sharpsat-TD}}
\newcommand{\cli}{\texttt{clingo}}

\section{Conclusion}
We establish a novel technique for counting answer sets under
assumptions combining ideas from knowledge compilation and
combinatorial solving. Knowledge compilation and known transformations
of ASP programs into CNF formulas already provide a basic toolbox for
counting answer sets. However, compilations suffer from overhead when
constructing CNFs. Our approach is similar to propagation-based
solving when searching for one solution. We construct compilations
that allow reasoning for supported models and apply a combinatorial
principle to count answer sets. Our approach gradually reduces the
over-counting we obtain when considering supported models. Further, we
introduce domain-specific simplification techniques for counting
graphs.

We expect our technique to be useful for navigating answer sets or
answering probabilistic questions on ASP programs, requiring repeated counting questions under assumptions.
Thereby, we see particular potential of our quantitative technique in the study
  and analysis of existing solving approaches and heuristics, especially
  through the lense of answer set navigation, where we expect synergies.
For instance, feasible repeated counting might yield useful counting-based
  metrics in the context of searching diverse answer
  sets~\cite{BohlGR23,BohlG22}. 
Another interesting application could be to 
  augment visual representations of answer sets~\cite{DGKMRY2022,hahn2022clingraph} with designated quantitative
characteristics, such as relative frequencies obtained by repeated counting under assumptions.

For future work, we plan to investigate techniques
to reduce the size of compilations for supported models, which can, in
fact, already be a bottleneck due to the added clauses modeling the
support of an atom. There, domain-specific preprocessing or an
alternative compilation could be promising. Furthermore, fast
identification of unsatisfiable cases by incremental SAT solving could be
interesting to evaluate.
From the practical side, it is seems also be interesting whether we
can speed up counting by GPUs~\cite{FichteHecherRoland21} or database
technology~\cite{FichteHecherThier22} in the ASP navigation setting.
From the theoretical side, questions on the
effectiveness of knowledge compilations in ASP might be interesting
and similar to considerations for
formulas~\cite{darwiche2002knowledge}.
Finally, we believe that verifiable results would also be interesting
when exact bounds are required, similar to techniques that have
recently been developed in propositional
counting~\cite{FichteHecherRoland22a,BeyersdorffHoffmannSpachmann23,BryantNawrockiAvigad23}.

\section*{Acknowledgements}
Research was funded by the BMBF, Grant 01IS20056\_NAVAS,
by ELLIIT funded by the Swedish government, by the Austrian Science
Fund (FWF) grants J4656, P32830, and Y1329.
The authors gratefully acknowledge the GWK support for funding this
project by providing computing time through the Center for Information
Services and HPC (ZIH) at TU Dresden.
Additional computations were enabled by resources provided by the
National Academic Infrastructure for Supercomputing in Sweden (NAISS)
at Link\"oping partially funded by the Swedish Research Council
through grant agreement no. 2022-06725.

\bibliographystyle{tlplike}
\bibliography{refs}

\label{lastpage}
\end{document}